\documentclass{article}

\usepackage{microtype}
\usepackage{graphicx}
\usepackage{subcaption}
\usepackage{booktabs} % for professional tables

\usepackage{hyperref}

\usepackage[preprint]{icml2026}

\usepackage{amsmath}
\usepackage{amssymb}
\usepackage{mathtools}
\usepackage{amsthm}

\usepackage[capitalize,noabbrev]{cleveref}

\theoremstyle{plain}
\newtheorem{theorem}{Theorem}[section]
\newtheorem{proposition}[theorem]{Proposition}
\newtheorem{lemma}[theorem]{Lemma}

\theoremstyle{definition}
\newtheorem{definition}[theorem]{Definition}
\newtheorem{assumption}[theorem]{Assumption}
\theoremstyle{remark}

\usepackage{amsmath}
\usepackage{amsthm}  % for proposition
\usepackage{adjustbox}
\usepackage{multicol,multirow}
\usepackage{hyperref}
\usepackage{url}

\usepackage{wrapfig}   % 文字环绕宏包
\usepackage{bm}  % \bm bold font for math
\usepackage{amssymb}  % \mathbb for set representation
\usepackage{bbm}  % \mathbbm 
\usepackage{xcolor}  % \textcolor{black}{...}

\usepackage{color}
\usepackage{colortbl}
\definecolor{mylightblue}{rgb}{0.8, 0.9, 1.0}
\usepackage[textsize=tiny]{todonotes}

\icmltitlerunning{Stabilizing Policy Optimization via Logits Convexity}

\begin{document}

\twocolumn[
  %\icmltitle{Stabilizing RLHF via Logits Convex Optimization}

  \icmltitle{Stabilizing Policy Optimization via Logits Convexity}

  % It is OKAY to include author information, even for blind submissions: the
  % style file will automatically remove it for you unless you've provided
  % the [accepted] option to the icml2026 package.

  % List of affiliations: The first argument should be a (short) identifier you
  % will use later to specify author affiliations Academic affiliations
  % should list Department, University, City, Region, Country Industry
  % affiliations should list Company, City, Region, Country

  % You can specify symbols, otherwise they are numbered in order. Ideally, you
  % should not use this facility. Affiliations will be numbered in order of
  % appearance and this is the preferred way.
  \icmlsetsymbol{equal}{*}

  \begin{icmlauthorlist}
    \icmlauthor{Hongzhan Chen}{sch,sii}
    \icmlauthor{Tao Yang}{comp}
    \icmlauthor{Yuhua Zhu}{sch}
    \icmlauthor{Shiping Gao}{sch}
    \icmlauthor{Xiaojun Quan}{sch,slai}
    \icmlauthor{Ting Yao}{comp}
    %\icmlauthor{}{sch}
    %\icmlauthor{}{sch}
  \end{icmlauthorlist}
  
  \icmlaffiliation{comp}{Wechat Search, Tencent Inc, China}
  \icmlaffiliation{sch}{School of Computer Science and Engineering, Sun Yat-sen University, China}
  \icmlaffiliation{sii}{Shanghai Innovation Institute}
  \icmlaffiliation{slai}{Shenzhen Loop Area Institute}

  \icmlcorrespondingauthor{Xiaojun Quan}{quanxj3@mail.sysu.edu.cn}
  \icmlcorrespondingauthor{Tao Yang}{luckytyang@tencent.com}

  % You may provide any keywords that you find helpful for describing your
  % paper; these are used to populate the "keywords" metadata in the PDF but
  % will not be shown in the document
  \icmlkeywords{Machine Learning, ICML}

  \vskip 0.3in
]

% this must go after the closing bracket ] following \twocolumn[ ...

% This command actually creates the footnote in the first column listing the
% affiliations and the copyright notice. The command takes one argument, which
% is text to display at the start of the footnote. The \icmlEqualContribution
% command is standard text for equal contribution. Remove it (just {}) if you
% do not need this facility.

% Use ONE of the following lines. DO NOT remove the command.
% If you have no special notice, KEEP empty braces:
\printAffiliationsAndNotice{}  % no special notice (required even if empty)
% Or, if applicable, use the standard equal contribution text:
% \printAffiliationsAndNotice{\icmlEqualContribution}

\begin{abstract}
  While reinforcement learning (RL) has been central to the recent success of large language models (LLMs), RL optimization is notoriously unstable, especially when compared to supervised fine-tuning (SFT). In this work, we investigate the stability gap between SFT and RL from a gradient-based perspective, and show that the convexity of the SFT loss with respect to model logits plays a key role in enabling stable training. Our theoretical analysis demonstrates that this property induces favorable gradient directionality during optimization. In contrast, Proximal Policy Optimization (PPO), a widely adopted policy gradient algorithm utilizing a clipped surrogate objective, lacks this stabilizing property. Motivated by this observation, we propose \textbf{Logits Convex Optimization (LCO)}, a simple yet effective policy optimization framework that aligns the learned policy with an optimal target derived from the original RL objective, thereby emulating the stabilizing effects of logits-level convexity. Extensive experiments across multiple model families show that our LCO framework consistently improves training stability and outperforms conventional RL methods on a broad range of benchmarks. %Code and datasets will be made publicly available.
  %commented by qxj on 0120: Reinforcement learning (RL) has been pivotal to the recent success of large language models (LLMs) across a broad spectrum of tasks. However, RL optimization often suffers from inherent stability challenges, particularly when compared to supervised fine-tuning (SFT). In this work, we investigate the stability gap between SFT and RL from a gradient-based perspective. We observe that in SFT, the convexity of the loss with respect to the model's logits contributes significantly to stable training. Our theoretical analysis shows that this convexity ensures favorable gradient directionality during optimization. In contrast, the policy gradient objectives of widely used algorithms such as PPO lack this stabilizing property. Motivated by this insight, we propose Logits Convex Optimization (LCO), a simple yet effective policy optimization strategy to align the policy distribution with an optimal target distribution via forward KL divergence, thereby emulating the stabilizing effects of logits convexity. Empirical results across multiple model families demonstrate that LCO improves training stability and consistently outperforms conventional RL methods on various benchmarks. Code and datasets will be made publicly available.
\end{abstract}

% \begin{abstract}
%   Reinforcement learning (RL) has been pivotal to the recent success of large language models (LLMs) across a broad spectrum of tasks. However, RL optimization often suffers from inherent stability challenges, particularly when compared to supervised fine-tuning (SFT). In this work, we investigate the stability gap between SFT and RL from a gradient-based perspective. We identify a property of the cross-entropy loss with softmax in SFT, which we term \emph{logits convexity}, characterized by local convexity with respect to logits. Our theoretical analysis shows that logits convexity induces smoother gradient magnitudes during optimization, thereby enhancing stability. In contrast, the policy gradient objectives of widely used algorithms such as PPO and GRPO lack this property. 
%   Motivated by this insight, we propose Logits Convex Optimization (LCO), a simple yet effective policy optimization strategy to align the policy distribution with a carefully designed target distribution via KL divergence to emulate the stabilizing effects of logits convexity.
%   Empirical results demonstrate that LCO improves stability and consistently outperforms conventional RL methods on both reasoning and non-reasoning benchmarks. Code and datasets will be made publicly available.
% \end{abstract}

\vspace{-3mm}
\section{Introduction}

Reinforcement learning (RL) has become a cornerstone for aligning large language models (LLMs) with human preferences  \citep{ouyang2022training,bai2022training} and enhancing complex capabilities such as reasoning \citep{guo2025deepseek,yang2025qwen3}.  Despite these advances, RL training often suffers from inherent instability \citep{rafailov2024direct}. Existing approaches attempt to address this issue through variance reduction in advantage estimation \citep{schulman2015high}, clipping strategies that constrain parameter updates \citep{schulman2017proximal,yu2025dapo}, and KL-based penalties that regulate policy shifts \citep{ouyang2022training,shao2024deepseekmath}. 
Although these methods mitigate instability to some extent, it remains a persistent challenge in LLM optimization, and its underlying causes are still not fully understood \citep{team2025ring,zhu2025carft}.
%Although these methods mitigate instability, it remains a persistent challenge, and its underlying causes are not yet fully understood \citep{team2025ring,zhu2025carft}, warranting closer examination.

%it remains a persistent challenge \citep{team2025ring,zhu2025carft}, warranting a closer examination of its underlying causes in LLMs.

%. This observation motivates a deeper analysis of the underlying causes of RL instability in LLMs.
%commented by qxj on 0120: Although these solutions mitigate instability to some extent, they do not fully resolve it \citep{team2025ring,zhu2025carft}. This motivates a deeper understanding of the underlying causes of RL instability in LLMs.

% \begin{figure}
%     \centering
%     \includegraphics[width=0.99\linewidth]{figures/introduction.pdf}
%     \vspace{-2mm}
%     \caption{\textbf{(a)} Gradient norm during training for PPO and LCO-KLD. \textbf{(b)} Pass@1 results of PPO and LCO-KLD. \textbf{(c)} Gradient norm during training for SFT. \textbf{(d)} Pass@1 results of SFT. }
%     \label{fig:introduction}
%     \vspace{-5mm}
% \end{figure}

\begin{figure}
    \centering
    \vspace{-2mm}
    \includegraphics[width=0.99\linewidth]{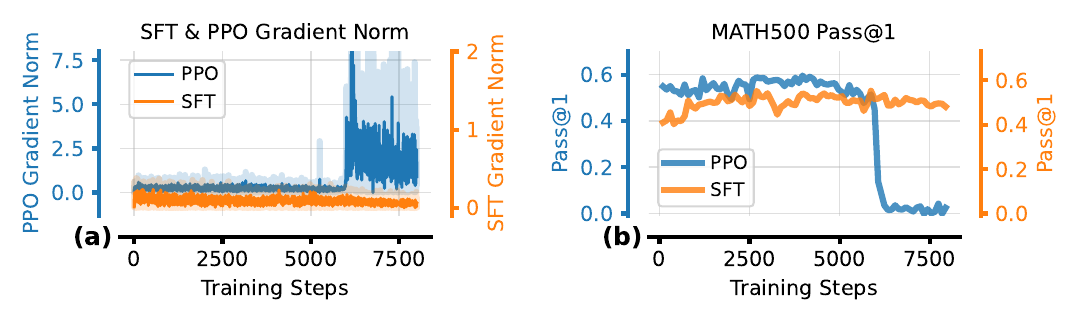}
    \vspace{-3mm}
    \caption{\textbf{(a)} Gradient norm during training for SFT and PPO. \textbf{(b)} Pass@1 results of SFT and PPO on the MATH500 benchmark..}
    \label{fig:introduction-part-1}
    \vspace{-6mm}
\end{figure}

In this work, we adopt a gradient-centric perspective and explore whether intrinsic properties of the loss landscape contribute to RL instability in LLM training. Under this lens, we find a striking contrast between RL and supervised fine-tuning (SFT). In widely used RL algorithms, such as Proximal Policy Optimization (PPO) \citep{schulman2017proximal}, the clipped surrogate objective often produces volatile gradients, including pronounced gradient explosions, even when standard stabilization techniques like clipping or KL regularization are applied (Figure~\ref{fig:introduction-part-1}(a)). These fluctuations can trigger excessively large parameter updates, potentially causing irreversible training collapse (Figure~\ref{fig:introduction-part-1}(b)). In contrast, SFT typically follows a more stable optimization trajectory.
%\citep{wu2025generalization,he2025amft,liu2025uft}, highlighting 
This raises a fundamental question: \emph{What explains the superior training stability of SFT compared to RL methods?}

By investigating the geometric properties of the optimization landscape, we identify a property termed \textit{logits convexity}, defined as local convexity at the logits of LLMs. Our theoretical analysis in \cref{sec:lco-gradient-analysis} demonstrates that logits convexity facilitates favorable gradient behavior during optimization. Specifically, by bridging the logit space and the parameter space, this property guarantees that the parameter-space gradient remains directionally aligned with the path toward the near-optimal parameters, as established in Proposition \ref{prop:gradient-directionality}. This provides a guarantee that gradient descent is not misled by spurious stationary points that may arise from the parameter landscape. While SFT loss exhibits logits convexity, which ensures stable gradient updates, RL objectives such as PPO \citep{schulman2017proximal} lack this property and are therefore susceptible to highly turbulent gradient dynamics that undermine training stability.

% Upon examining the underlying causes, we identify a property termed \emph{logits convexity}, defined as loss is convex with respect to logits. In the context of RL for LLMs，our theoretical analysis demonstrates that logits convexity facilitates favorable gradient behavior during optimization. Specifically, by bridging the logit space and the parameter space, this property guarantees that the parameter-space gradient remains directionally aligned with the path toward the near-optimal parameters. This provides a guarantee that gradient descent is not misled by spurious stationary points arise from the parameter landscape. While SFT loss exhibits logits convexity, which ensures stable gradient updates, RL objectives such as PPO \citep{schulman2017proximal} lack this property, making them susceptible to large gradient fluctuations and training instability.

% Upon examining the underlying causes, we identify a property termed \emph{logits convexity}, defined as local convexity at the logits level. Our theoretical analysis demonstrates that logits convexity facilitates favorable gradient behavior during optimization, naturally leading to diminishing gradient magnitudes as the policy approaches convergence. This behavior aligns with the intuitive expectation that updates should become more conservative near an optimum. While SFT loss exhibits logits convexity, which ensures stable gradient updates, RL objectives such as PPO \citep{schulman2017proximal} lack this property, making them susceptible to large gradient fluctuations and training instability.

Building on these theoretical insights, we propose \textbf{Logits Convex Optimization (LCO)}, an RL optimization framework that preserves logits convexity and promotes stable training for LLMs. LCO reformulates the RL task as an optimal target matching problem, while remaining mathematically consistent with the PPO objective that maximizes the expected advantage. 
Under this framework, we explore two primary approaches to improve the optimization landscape. First, we develop two direct alignment approaches (LCO-MSE and LCO-LCH) that target the optimal logits, ensuring strong convexity and enabling rapid convergence. Second, we examine a distribution-based approach (LCO-KLD) that aligns with the optimal distribution via forward KL divergence to ensure probabilistic consistency.
% The framework includes two loss variants, LCO-MSE and LCO-LCH, which align directly with the optimal logits and exhibit strong convexity at the logits level, enabling rapid convergence. Additionally, LCO incorporates an LCO-KLD loss variant that aligns with the optimal distribution via forward KL divergence.
By targeting the same optimal solution through a more well-behaved loss landscape, LCO produces reliable policy gradient updates and consistent performance improvements (Figure \ref{fig:introduction-part-2}). 

Empirical results across mathematical reasoning, reading comprehension, and instruction-following tasks show that LCO outperforms standard RL baselines in both stability and performance.
%Empirical evaluations spanning mathematical reasoning, machine reading comprehension, and instruction-following tasks, conducted across multiple model families, demonstrate that LCO achieves superior stability and performance compared to standard RL baselines. 
Moreover, our analysis yields several key findings. 
\textbf{First}, we identify a primary source of training instability when using surrogate objectives: excessively large gradient norms arising from negative samples within non-convex loss regions. 
\textbf{Second}, we reveal that sampled actions with low probability can cause sudden spikes in gradient updates, which affect the stability of methods such as PPO.
\textbf{Third}, we show that the LCO objectives yield stable gradient updates that progressively diminish as training nears convergence, thereby mitigating instability in RL training.
%we show that the LCO objectives yield stable and progressively diminishing gradient updates as training nears convergence, thereby mitigating instability in RL training.

\begin{figure}[t]
    \centering
    \vspace{-2mm}
    \includegraphics[width=0.99\linewidth]{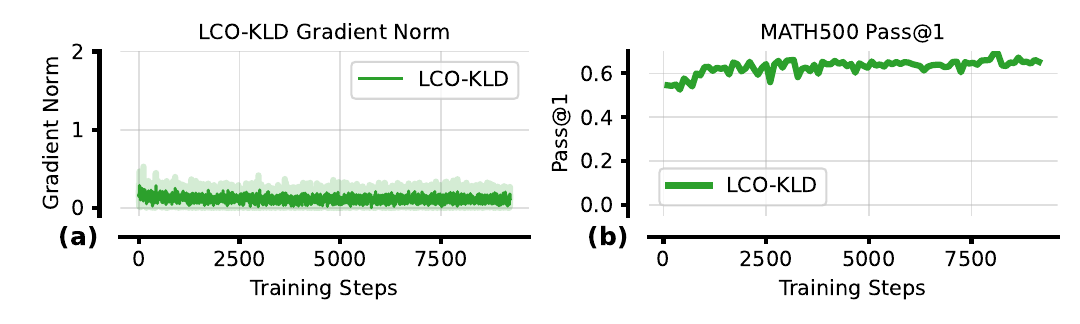}
    \vspace{-3mm}
    \caption{\textbf{(a)} Gradient norm during training for LCO-KLD. \textbf{(b)} Pass@1 results of LCO-KLD on MATH500.}
    \label{fig:introduction-part-2}
    \vspace{-5mm}
\end{figure}

% Building on this property, we propose \textbf{Logits Convex Optimization (LCO)}, an RL optimization objective that preserves logits convexity and promotes stable training. LCO works by aligning the policy distribution with an optimal target distribution through forward KL divergence. The LCO objective preserves the standard original TRPO \citep{schulman2015trust,schulman2017proximal} objective: it maximizes the expected advantage and targets the same optimal solution, while transforms a complex RL problem into a more stable distribution match problem. LCO produces stable gradient updates (Figure \ref{fig:introduction}(a)) and delivers consistent performance improvements (Figure \ref{fig:introduction}(b)). Empirical evaluations on math reasoning, machine reading comprehension, and instruction following show that LCO achieves superior stability and performance compared to standard RL baselines. Furthermore, our analysis yields several key findings. \textbf{First}, We identify a primary source of training instability in standard RL: excessively large gradient norms arising from negative samples in non-convex loss regions. \textbf{Second}, we reveal that sampled actions with low probability can cause sudden spikes in gradient updates, which affect the stability of methods such as PPO and GRPO. \textbf{Third}, we show that preserving logit convexity during optimization leads to stable and diminishing gradient updates as training approaches convergence, which mitigates RL training instability.

\section{Preliminary}
\vspace{-1mm}
\subsection{Problem Setup}
\vspace{-1mm}
We model autoregressive text generation as a Markov decision process (MDP). At time step $t$, the state $s_t$ consists of the prompt tokens together with all tokens generated up to step $t$. An action $a_t \in \mathcal V$ corresponds to selecting the next token from the vocabulary $\mathcal{V}$.
Given state $s_t$ at time step $t$, the policy $\pi_\theta$ defines a distribution over actions: $\pi_{\theta}(a_t|s_t)=\frac{\exp z_{\theta}(s_t,a_t)}{\sum_{a'_t\in\mathcal{V}}\exp z_{\theta}(s_t,a'_t)}$
where $z_{\theta}(s_t, a_t)$ is the logit associated with action $a_t$ at state $s_t$, parameterized by $\theta$.

%We define the state $s_t$ at time step $t$ as the combination of the prompt tokens and all tokens generated up to that step. An action $a_t$ at time step $t$ corresponds to selecting a token from the vocabulary $\mathcal{V}$. Given state $s_t$, the probability that the policy $\pi_{\theta}$ generates action $a_t$ is denoted by $\pi_{\theta}(a_t|s_t)$. In this work, we consider the policy $\pi_{\theta}$ to be a language model with a softmax output: $\pi_{\theta}(a_t|s_t)=\frac{\exp z_{\theta}(s_t,a_t)}{\sum_{a'_t\in\mathcal{V}}\exp z_{\theta}(s_t,a'_t)}$, where $z_{\theta}(s_t,a_t)$ is the logit corresponding to the action $a_t$ and state $s_t$ at time step $t$, parameterized by $\theta$. 
% In the following, we use $i$ to denote the index of a sampled action $a_{t,i}$, $j$ the index of a non-sampled action $a_{t,j}$, and $k$ the index of an arbitrary action $a_{t,k}$.
\vspace{-1.5mm}
\subsection{Supervised Fine-Tuning}
\vspace{-1.5mm}
Supervised fine-tuning (SFT) trains the language model to maximize the likelihood of target (ground-truth) tokens $a_t$ given the preceding context $s_t$. The objective is defined as:
\begin{equation}
\label{eq:sft-objective}
\mathcal{L}_{\text{SFT}} = - \mathbb{E}_t
\left[ \log \pi_\theta(a_t | s_t) \right],
\end{equation}
% where the expectation is taken over token–state pairs induced by the dataset and estimated using mini-batches.
% 这个说法和后续PPO中的期望计算保持一致
where the expectation $\mathbb{E}_t$ indicates the empirical average over a finite batch of samples.

% Supervised fine-tuning (SFT) trains language models to maximize the likelihood of target tokens given input text. Given training dataset $\mathcal{D}$, the loss function is defined as:
% \begin{equation}
%     \label{eq:sft-objective}
%     \mathcal{L}_\text{SFT} = -\mathbb{E}_t[\log\pi_\theta(a_{t}|s_t)],
% \end{equation}
% where the expectation $\mathbb{E}_t$ indicates the empirical average over a finite batch of samples.

% \subsection{TRPO}

% In TRPO \citep{schulman2015trust}, an objective function is maximized subject to a KL constraint between an old policy and current policy distribution. The theory justifying TRPO suggests using a penalty instead of a constraint by solving the following surrogate objective \citep{schulman2017proximal}:
% \begin{equation}
%     \label{eq:trpo-objective}
%     \begin{aligned}
%         & \mathcal{L}_\text{TRPO} = -\mathbb{E}_{t} \\ & \left[\frac{\pi_\theta(a_t|s_t)}{\pi_{\text{old}}(a_t|s_t)}A(s_t,a_t)-\beta\mathbb{D}_\text{KL}(\pi_\theta(\cdot|s_t)\|\pi_\text{old}(\cdot|s_t))\right],
%     \end{aligned}
% \end{equation}
% where the expectation $\mathbb{E}_t$ indicates the empirical average over a finite batch of samples \citep{schulman2017proximal}.

\subsection{Proximal Policy Optimization}

Proximal Policy Optimization (PPO) \citep{schulman2017proximal} is a widely used policy-gradient method for reinforcement learning. A common PPO-style objective augments the expected advantage with a KL penalty that constrains the policy $\pi_\theta$ to remain close to a behavioral policy $\pi_{\text{old}}$:
\begin{equation}
    \label{eq:ppo-objective-with-kl-penalty}
\max_{\pi_\theta}\mathbb{E}_{a_t\sim\pi_\theta(\cdot|s_t)}[A(s_t,a_t)]-\beta\mathbb{D}_\text{KL}(\pi_\theta(\cdot|s_t)\|\pi_\text{old}(\cdot|s_t)),
\end{equation}
  where $A(s_t, a_t)$ denotes the advantage function and $\beta > 0$ controls the strength of the KL regularization.

Rather than optimizing this KL-regularized objective directly, PPO introduces a clipped surrogate objective that heuristically constrains policy updates by limiting change in action probabilities between successive policies.
% Proximal Policy Optimization (PPO) \citep{schulman2017proximal} is a widely used reinforcement learning algorithm. 
% Its standard objective at time step $t$ is formulated as:
% \begin{equation}
%     \label{eq:ppo-objective-with-kl-penalty}
%     \max_{\pi_\theta}\mathbb{E}_{a_t\sim\pi_\theta(\cdot|s_t)}[A(s_t,a_t)]-\beta\mathbb{D}_\text{KL}(\pi_\theta(\cdot|s_t)\|\pi_\text{old}(\cdot|s_t)),
% \end{equation}
% which aims to maximize the expected advantage while regularizing the policy update to stay close to the behavioral policy $\pi_\text{old}$. 
%Rather than solving solving this KL-constrained problem directly, PPO introduces a clipped surrogate objective to constrain policy updates in a more heuristic manner:
% Proximal Policy Optimization (PPO) \citep{schulman2017proximal} is a widely used reinforcement learning algorithm designed to mitigate some of the instability issues inherent in policy gradient methods. PPO introduces the clipped surrogate objective to constrain policy updates, preventing excessively large steps that can degrade performance. The objective is defined as follows:
%The clipped surrogate objective is widely employed in mainstream reinforcement learning algorithms, with PPO \citep{schulman2017proximal} being a representative method. Its objective is defined as follows:
\begin{equation}
    \begin{aligned}
         \mathcal{L}_\text{PPO}=-&\mathbb{E}_{t} [ \min(r_t(\theta)A(s_t,a_t),\\ & \text{clip}(r_t(\theta),1-\epsilon,1+\epsilon)A(s_t,a_t))],
    \end{aligned}
    \label{eq:ppo-objective}
\end{equation}
where $r_t(\theta)=\frac{\pi_\theta(a_t|s_t)}{\pi_{\text{old}}(a_t|s_t)}$, $\text{clip}(\cdot)$ is the clip function and the $\epsilon$ is the clip range. 
This clipped surrogate objective prevents excessively large policy updates, thereby improving training stability compared to standard policy gradient methods.
%This clipped surrogate objective is designed to mitigate the instability commonly observed in policy gradient methods.

% \subsection{DPO}

% DPO \citep{rafailov2024direct} establishes a direct connection between human preferences and RL based on the Bradley-Terry model and the optimal policy in closed form:
% \begin{equation}
%     \pi^*(y|x)\propto\pi_\text{ref}(y|x)\exp\left(\frac{r(x,y)}{\beta}\right),
% \end{equation}
% where $r(x,y)$ is the reward with prompt $x$ and response $y$.

\begin{figure*}[t]
    \centering
    \vspace{-3mm}
    \includegraphics[width=0.9\linewidth]{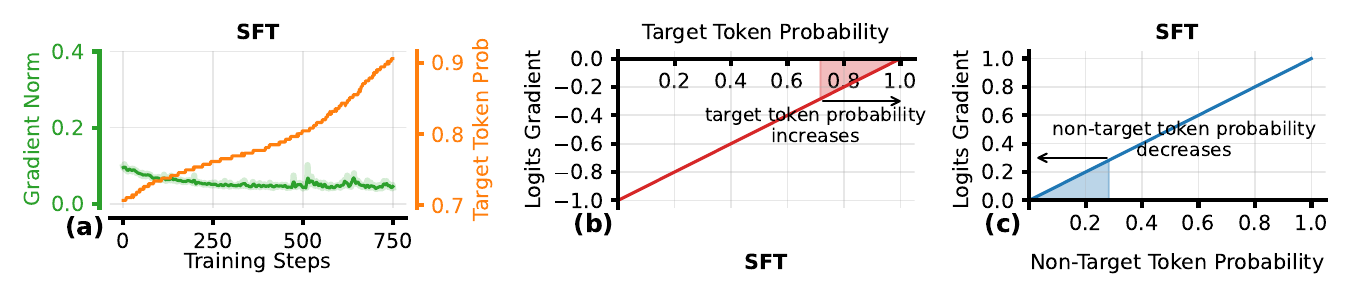}
    \vspace{-4mm}
    \caption{Training dynamics of SFT. \textbf{(a)} Gradient norm decreases as training progresses while target token probability on training sample increases. The magnitudes of both target token logit gradient \textbf{(b)} and the non-target token logit gradient \textbf{(c)} diminish as training progresses.}
    \label{fig:introduction-sft}
    \vspace{-2mm}
\end{figure*}

\begin{figure*}[t]
    \centering
    \vspace{-2.5mm}
    \includegraphics[width=0.9\linewidth]{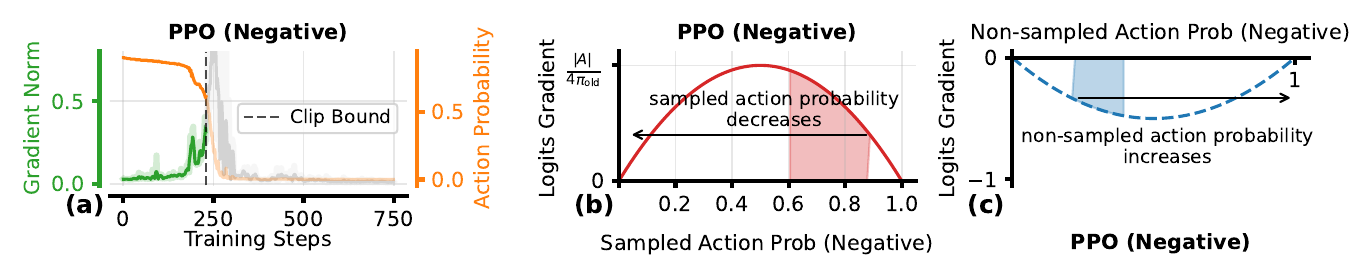}
    \vspace{-4mm}
    \caption{Training dynamics of PPO for actions with negative advantages. \textbf{(a)} Gradient norm increases before being clipped, while sampled action probabilities decrease. The magnitudes of both sampled action logit gradient \textbf{(b)} and non-sampled action logit gradient \textbf{(c)} increase before being clipped. The results show a correlation between the behavior of parameter-space gradients and logit-space gradients.}
    \label{fig:introduction-ratio}
    \vspace{-5mm}
\end{figure*}

\section{Gradient Dynamics}
In this section, we empirically analyze the distinct gradient dynamics of $\mathcal{L}_\text{SFT}$ and $\mathcal{L}_\text{PPO}$, and demonstrate how these dynamics influence logits gradients and training stability.

%commented by qxj: In this section, we empirically analyze the different gradient dynamics of $\mathcal{L}_\text{SFT}$ and $\mathcal{L}_\text{PPO}$. We then demonstrate how gradient dynamics relate to logits gradients and affect training stability.

\subsection{Gradient Dynamics of SFT}
% We first consider the gradient of SFT loss $\mathcal{L}_\text{SFT}$ with respect to parameters $\theta$:
We analyze the gradient behavior of SFT starting from the standard negative log-likelihood objective in \cref{eq:sft-objective}. At a time step $t$, let $a_t$ denote the target (or ground-truth) token and $a'_t$ be an arbitrary token in the vocabulary $\mathcal{V}$. The derivative of $\mathcal{L}_\text{SFT}$ with respect to the logit $z_\theta(s_t,a'_t)$ is:
\begin{equation}
    \label{eq:sft-logit-gradient}
    \frac{\partial \mathcal{L}_\text{SFT}}{\partial z_\theta(s_t, a'_t)}=\pi_\theta(a'_t|s_t)-\mathbb{I}_{a_t=a'_t},
\end{equation}
where $\mathbb{I}_{a_t=a'_t}$ is the indicator (derivation is in Appendix \ref{sec:appendix-sft-logit-gradient-derivation}). The logit gradient of a target token $a_t$ is $\pi_\theta(a_t|s_t)-1$, which pushes the model to increase the target logit. While for a non-target token $a'_t$ ($a_t\neq a'_t$), the logit gradient equals $\pi_\theta(a'_t|s_t)$, which pushes the model to decrease the non-target logit. Figure \ref{fig:introduction-sft}(a) illustrates the overall gradient dynamics, while Figures \ref{fig:introduction-sft}(b) and (c) depict the logit gradient dynamics. During training, target token probabilities increase and gradient norms decrease. We observe a parallel trend in the logit gradients: as target token probabilities asymptotically increase and the probabilities of non-target tokens vanish, the corresponding logit gradient magnitudes diminish correspondingly, reflecting convergence. This behavior aligns with the intuition that as the model nears optimality, the parameter updates naturally become smaller.

\vspace{-1mm}
\subsection{Gradient Dynamics of PPO}
\vspace{-0mm}
The clipping mechanism in PPO restricts parameter updates by nullifying the gradient when the ratio $r_t(\theta)$ moves outside a predefined interval. Specifically, the gradient is non-zero only when the following condition is satisfied: $(A(s_t,a_t) > 0 \text{ and } r_t(\theta)<1+\epsilon) \text{ or } (A(s_t,a_t)<0 \text{ and } r_t(\theta) > 1 - \epsilon)$. If this condition is not met, the objective enters the clipped region, and the gradient becomes zero. Consequently, we focus our analysis on the active region where the gradient is non-zero, and examine the dynamics of the gradient. At time step $t$, let $a_t$ be the sampled action and $a'_t$ be an arbitrary action in the vocabulary $\mathcal{V}$. Under this setting, the gradient of $\mathcal{L}_\text{PPO}$ with respect to the logit $z_\theta(s_t,a'_t)$ is given by (derivation is in Appendix \ref{sec:appendix-ppo-logit-gradient-derivation}):
\vspace{-1mm}
\begin{equation}
    \label{eq:ppo-logit-gradient}
    \frac{\partial \mathcal{L}_\text{PPO}}{\partial z_\theta(s_t,a'_t)}=\frac{A(s_t,a_t)}{\pi_\text{old}(a_t|s_t)}\pi_\theta(a_t|s_t)(\pi_\theta(a'_t|s_t)-\mathbb{I}_{a_t=a'_t}).
\end{equation}
We focus on the case where $A(s_t,a_t)<0$ (for $A(s_t,a_t)>0$, see \cref{fig:introduction-ratio-positive} in the Appendix). 
\cref{fig:introduction-ratio} (a) shows that gradient norm exhibits an increase before being clipped. A similar phenomenon can also be observed in logit gradients (\cref{fig:introduction-ratio} (b) and (c)).
% We consider $A(s_t,a_t)>0$, the gradient norm decreases as training progress \cref{fig:introduction-ratio} (a), while the magnitude of logit gradients decrease before being clipped (\cref{fig:introduction-ratio} (a-1) and (a-2)). While for $A(s_t,a_t)<0$, the gradient dynamics behave differently. \cref{fig:introduction-ratio} (b) shows that gradient norm exhibits increase before being clipped. A similar phenomenon can also be observed in logit gradients (\cref{fig:introduction-ratio} (b-1) and (b-2)). 
\textbf{A counterintuitive phenomenon emerges: as training progresses, the loss decreases while the gradient norms grow}. A comparison of the gradient norms for positive advantages reveals that updates are largely dominated by actions with negative advantages due to their disproportionately larger gradient magnitudes. This dominance results in a systemic reduction of action probabilities in subsequent sampling. Such gradient spikes typically manifest for actions with intermediate probabilities (e.g., near 0.5), triggering massive parameter updates that destabilize the training process. Furthermore, our analysis indicates that without the clipping mechanism, the gradient norm would exhibit initial increases followed by decreases, aligning closely with the dynamics of the logit gradients.

% 对比actions with positive advantage的梯度范数和actions with negative advantage的梯度范数，发现negative actions的梯度范数比较大，参数更新基本上会由negative actions的梯度所主导，因而造成$\pi_\theta$在下轮采样时action概率普遍降低. Gradient magnitude spikes typically occur for sampled actions with low probabilities (near 0.5), causing large parameter updates that can destabilize training. Additionally, The dynamics also show that when the clipping mechanism is removed, the gradient norm would exhibit initial increases followed by decreases as training progress, aligned with the logit gradients的趋势.

% As illustrated in \cref{fig:introduction-ratio}, as training progresses and the clip bound is reached, the gradient is truncated to zero. These examples also show that if the clipping mechanism were removed, the gradient norm of $\mathcal{L}_\text{PPO}$ for actions with negative advantages would exhibit a sustained increase before decreasing. This phenomenon is mirrored in the logit gradients (\cref{fig:introduction-ratio}(b-1) and (b-2)). \textbf{A counterintuitive phenomenon emerges: as training progresses, the loss decreases while the gradient norms grow}. Empirically, we observe that this increase in gradient norm correlates with training instability; during the next rollout phase, the LLMs tend to generate degraded responses that further destabilize the training process. 

\vspace{-1mm}
\section{On the Convexity of Logits}
\vspace{-1mm}
% \subsection{Logits Convex Optimization for RL}

Drawing upon the aforementioned analysis of gradient dynamics, we propose \textit{Logits Convex Optimization} (\textit{LCO}), a general training paradigm designed to stabilize the training of LLMs in reinforcement learning. The fundamental principle of LCO is to reformulate the complex RL task as a supervised alignment problem toward an optimal target derived from the original objective.
We start from the standard regularized RL objective in Eq. \eqref{eq:ppo-objective-with-kl-penalty}. Under this objective, the optimal policy $\pi^*$ admits a closed-form solution characterized by the following proposition:
% a training objective that stabilizes gradient dynamics of LLMs in reinforcement learning. We begin by considering the standard original objective of PPO in \cref{eq:ppo-objective-with-kl-penalty}. Under this objective, the optimal policy $\pi^*$ admits a closed-form solution characterized by the following proposition:
\begin{proposition}
    \label{prop:trpo-optimal-policy}
    For a given behavioral policy $\pi_\text{old}$ and advantage function $A(s_t,a_t)$, the optimal policy $\pi^*$ that maximizes \cref{eq:ppo-objective-with-kl-penalty} at time step $t$ is:
    \vspace{-1mm}
    \begin{equation}
        \label{eq:trpo-optimal-policy}
        \pi^*(a_t|s_t)=\frac{\pi_\text{old}(a_t|s_t)\exp\left[\frac{A(s_t,a_t)}{\beta}\right]}{\sum_{a'_t\in\mathcal{V}}\pi_\text{old}(a'_t|s_t)\exp\left[\frac{A(s_t,a'_t)}{\beta}\right]}.
    \end{equation}
    A specific solution for the optimal logits $z^*(s_t,a_t)$ that induces $\pi^*$ is obtained by direct advantage-based adjustment:
    \vspace{-1mm}
    \begin{equation}
        \label{eq:optimal-logits}
        z^*(s_t,a_t)=z_\text{old}(s_t,a_t)+\frac{A(s_t,a_t)}{\beta},
    \end{equation}
    where $z_\text{old}(s_t,a_t)$ is the logit of behavioral policy $\pi_\text{old}$ corresponding to the action $a_t$ and state $s_t$.
\end{proposition}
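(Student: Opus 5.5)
The plan is to treat \cref{eq:ppo-objective-with-kl-penalty} at a fixed state $s_t$ as a finite-dimensional optimization over the probability simplex on $\mathcal{V}$. Write $p(a)=\pi_\theta(a|s_t)$, $q(a)=\pi_\text{old}(a|s_t)$ and $A(a)=A(s_t,a)$. The objective becomes
\begin{equation*}
J(p)=\sum_{a\in\mathcal{V}} p(a)A(a)-\beta\sum_{a\in\mathcal{V}} p(a)\log\frac{p(a)}{q(a)},
\end{equation*}
to be maximized subject to $\sum_a p(a)=1$ and $p\ge 0$. Since $\beta>0$ and negative entropy is strictly convex, $J$ is strictly concave. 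Any stationary point in the relative interior is therefore the unique global maximizer.

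The cleanest route is a Gibbs-variational argument, which avoids Lagrange multipliers and boundary issues. Define
\begin{equation*}
Z=\sum_{a'}q(a')\exp(A(a')/\beta), \qquad \pi^*(a)=q(a)\exp(A(a)/\beta)/Z .
\end{equation*}
Substituting $A(a)=\beta\log(\pi^*(a)/q(a))+\beta\log Z$ into $J$ and using $\sum_a p(a)=1$ gives
\begin{equation*}
J(p)=\beta\log Z-\beta\,\mathbb{D}_\text{KL}(p\|\pi^*).
\end{equation*}
By Gibbs' inequality, $\mathbb{D}_\text{KL}(p\|\pi^*)\ge 0$, with equality iff $p=\pi^*$. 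So $\pi^*$ is the unique maximizer, which is exactly \cref{eq:trpo-optimal-policy}. As a sanity check, I would also note the Lagrangian derivation: setting $A(a)-\beta(\log p(a)-\log q(a)+1)-\lambda=0$ yields the same form, with $\lambda$ fixed by normalization.

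For the logit statement, I would take $z^*(s_t,a)=z_\text{old}(s_t,a)+A(s_t,a)/\beta$ and compute its softmax. We have $\exp z^*(a)=\exp z_\text{old}(a)\exp(A(a)/\beta)$, and $\exp z_\text{old}(a)=q(a)\sum_{a'}\exp z_\text{old}(a')$. The common factor $\sum_{a'}\exp z_\text{old}(a')$ cancels between numerator and denominator, leaving exactly $\pi^*$. I would also remark that this solution is \emph{a} solution rather than the unique one: softmax is invariant to adding a state-dependent constant, so $z^*$ is determined only up to such a shift, and this is why the statement says ``a specific solution.''

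The main subtlety I expect is the support issue: actions with $q(a)=0$ must receive $p(a)=0$, or the KL term is infinite. Under the softmax parameterization, however, $q>0$ everywhere, so this is harmless. Apart from that, the argument is routine; the only real step is the KL rewriting, which gives uniqueness for free.
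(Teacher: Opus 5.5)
Your proof is correct, but it reaches the policy formula by a different route from the paper. The paper uses only the Lagrange-multiplier calculation that you mention as a sanity check. It differentiates $\mathcal{J}$ with respect to $\pi_\theta(a_t|s_t)$, sets the derivative to zero to get $\pi^*\propto\pi_\text{old}\exp(A/\beta)$, and normalizes. It never checks that this stationary point is a maximum, and it ignores the nonnegativity constraints. Your identity $J(p)=\beta\log Z-\beta\,\mathbb{D}_\text{KL}(p\|\pi^*)$, combined with Gibbs' inequality, handles both points at once. It gives global optimality and uniqueness over the whole simplex, boundary included. That is exactly what the main text relies on when it later calls $\pi^*$ the unique global maximizer, a claim the stationarity argument alone does not establish. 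The trade-off is that the Lagrangian route is mechanical and produces the formula without knowing it in advance. The Gibbs route needs the answer beforehand to complete the KL term, but then verification takes one line. Your logit argument, cancelling $\sum_{a'}\exp z_\text{old}(a')$ after substituting the softmax form of $\pi_\text{old}$, matches the paper's treatment. So does your remark that $z^*$ is determined only up to a state-dependent shift.
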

\vspace{-3mm}
\begin{proof}
    See Appendix \ref{sec:proof-of-trpo-optimal-policy}.
\end{proof}
\vspace{-2mm}
The general objective of LCO is to fit the parameterized policy $\pi_\theta$ to these optimal targets (either the optimal distribution $\pi^*$ or the optimal logits $z^*$). Under this framework, we propose two distinct implementation strategies: regression-based alignment and distribution-based alignment.

\subsection{LCO Objectives}
\paragraph{Regression-based LCO Objectives} 
Following the characterization of the optimal logits in Proposition \ref{prop:trpo-optimal-policy}, we first define the LCO objective as the minimization of the discrepancy between the target optimal logits $z^*$ and logits $z_\theta$ of the parameterized policy $\pi_\theta$ at each time step. 
% To enforce this alignment through standard regression metrics, 我们提出两个LCO variants using MSE 和 Log-Cosh. we first apply the MSE and this yields the $\mathcal{L}_\text{LCO-MSE}$:
To enforce this alignment, we propose two variants of the LCO objective using standard regression metrics: MSE and log-cosh loss. The first variant applies MSE, yielding the $\mathcal{L}_\text{LCO-MSE}$:
\begin{equation}
    \label{eq:lco-mse-objective}
    \mathcal{L}_\text{LCO-MSE}=\mathbb{E}_t\left[\frac{1}{|\mathcal{V}|}\sum_{a_t\in\mathcal{V}}(z_\theta(s_t,a_t)-z^*(s_t,a_t))^2\right],
\end{equation}
where $\mathcal{V}$ is the vocabulary. Moreover, to provide a smoother optimization landscape that is more robust to outliers in logit space, we define the log-cosh variant $\mathcal{L}_\text{LCO-LCH}$ as follows:
\begin{equation}
    \label{eq:lco-log-cosh-objective}
    \begin{aligned}
        &\mathcal{L}_\text{LCO-LCH}= \\&\mathbb{E}_t \left[\frac{1}{|\mathcal{V}|}\sum_{a_t\in\mathcal{V}}\log\cosh(z_\theta(s_t,a_t)-z^*(s_t,a_t))\right].
    \end{aligned}
\end{equation}
The $\log\cosh(\cdot)$ loss behaves like MSE for small errors but transitions to a linear penalty for large discrepancies, ensuring gradient stability even with noisy advantage estimates.

\vspace{-0.5mm}
\paragraph{Distribution-based LCO Objective} 
While the regression-based losses above directly target logit values, we can also formulate the objective in terms of the optimal policy distribution. We then define $\mathcal{L}_\text{LCO-KLD}$ as the minimization of the forward KL divergence between the optimal policy $\pi^*$ and the parameterized policy $\pi_\theta$:
\vspace{-0.5mm}
\begin{equation}
    \label{eq:lco-kld-objective}
    \mathcal{L}_\text{LCO-KLD}=\mathbb{E}_{t}\left[\sum_{a_t\in\mathcal{V}}\pi^*(a_t|s_t)\log\frac{\pi^*(a_t|s_t)}{\pi_\theta(a_t|s_t)}\right].
\end{equation}
Theoretically, since $\pi^*$ is the unique global maximizer of \cref{eq:ppo-objective-with-kl-penalty}, minimizing any LCO loss (e.g., $\mathcal{L}_\text{LCO-MSE}$, $\mathcal{L}_\text{LCO-LCH}$, or $\mathcal{L}_\text{LCO-KLD}$) is equivalent to solving the original RL problem, provided the parameterized policy $\pi_\theta$ has sufficient capacity to represent $\pi^*$ or $z^*$.
\vspace{-2.0mm}
\subsection{Advantage Estimation} 
The LCO framework requires advantage signals $A(s_t, a_t)$ to construct the optimal target. Depending on the availability of feedback, we propose three strategies to estimate this advantage, ranging from sparse, sample-based signals to dense, distribution-level signals.

\vspace{-2mm}
\paragraph{Sampled Action Sparse Estimation} In traditional reinforcement learning settings, such as PPO, advantage values are typically only available for the action $a_t$ actually sampled from the behavioral policy. For any other non-sampled action $a'_t$ in the vocabulary $\mathcal{V}$, the advantage signal is unknown. In this case, we adopt a sparse estimation strategy:
\vspace{-1mm}
\begin{equation}
    \hat{A}(s_t, a'_t) =
    \begin{cases}
    A(s_t, a_t), & \text{if } a'_t = a_t, \\
    0, & \text{otherwise}.
    \end{cases}
\end{equation}

\vspace{-2mm}
\paragraph{Log Probability-Based Dense Estimation} In contrast, a more comprehensive approach provides training signals across the entire vocabulary $\mathcal{V}$. Building on the theoretical framework of \citet{ziebart2010modeling,li2025generalist}, the advantage at the distribution level can be derived from the log probability assigned by an LLM $\phi$:
\begin{equation}
    \log\phi(a_t|s_t)=Q(s_t,a_t)-V(s_t),
\end{equation}
where $Q(s_t,a_t)$ is the Q-function and $V(s_t)$ is the value function. Using the Bellman equation, the advantage estimator is therefore defined as: $A(s_t,a_t):=\log\phi(a_t|s_t)$. 

\vspace{-2mm}
\paragraph{DPO-Based Preference Estimation}
As provided by \citet{rafailov2024from}, the log ratio between a DPO-tuned LLM $\phi_\text{DPO}$ and a reference LLM $\phi_\text{ref}$ can serve as a token-level advantage signal for RL training. This provides an implicit representation of the advantage at time step $t$:
\begin{equation}
    \begin{aligned}
        \log\frac{\phi_\text{DPO}(a_t|s_t)}{\phi_\text{ref}(a_t|s_t)}&=r(s_t,a_t)+V(s_{t+1})-V(s_t).
    \end{aligned}
\end{equation}
Again applying the Bellman equation, the advantage estimator is defined as: $A(s_t,a_t):=\log\frac{\phi_\text{DPO}(a_t|s_t)}{\phi_\text{ref}(a_t|s_t)}$. 
Due to its natural alignment with the Bradley-Terry preference model and its widespread use in previous literature \citep{zhong2024dpo,chen2025discriminative}, we adopt this dense estimator as the default for our primary experiments.

% Both options can serve as advantage estimators within the LCO framework. For our experiments, we adopt the DPO-based estimator due to its natural alignment with the Bradley–Terry (BT) preference model and its widespread use in related work \citep{zhong2024dpo,chen2025discriminative}.

\vspace{-2mm}
\subsection{Gradient Analysis of LCO}
\vspace{-1mm}
\label{sec:lco-gradient-analysis}

To characterize the optimization landscapes of different training objectives for LLMs, we first establish the relationship between the parameter-space gradient and the functional change in the logit space. Consider a loss function $\mathcal{L}$ with a finite lower bound $\mathcal{L}^*>-\infty$. Let $\boldsymbol{z}_\theta\in\mathbb{R}^{|\mathcal{V}|}$ denote the logit vector produced by an LLM with parameters $\theta$. We define $\Theta^* = \{\theta^*: \mathcal{L}(\boldsymbol{z}_{\theta^*}) - \mathcal{L}^* < \epsilon\}$ as the set of near-optimal parameters for a given tolerance $\epsilon > 0$.
% To theoretically characterize the optimization landscape of different objectives in the context of LLM training, we first establish the relationship between the parameter-space gradient and the functional change in the logits space. We consider a loss function $\mathcal{L}$ has a finite lower bound $\mathcal{L}^*>-\infty$. Let $\boldsymbol{z}_\theta$ denote the logits produced by a LLM with parameters $\theta$. We define $\Theta^*=\{\theta:\mathcal{L}(\boldsymbol{z}_\theta)-\mathcal{L}^*<\epsilon\}$ as the set of near-optimal parameters for a given tolerance $\epsilon>0$.
\begin{assumption}
    \label{ass:second-order-assumption}
    For any $\theta^*\in\Theta^*$ in the local neighborhood of $\theta$, we assume that the network operates in a regime where the parameter displacement $\Delta\theta=\theta^*-\theta$ is sufficiently small, so that the first-order Taylor expansion:
    \vspace{-1.5mm}
    \begin{equation}
        \boldsymbol{z}_{\theta^*}\approx\boldsymbol{z}_\theta+\nabla_\theta\boldsymbol{z}_\theta^\top\Delta\theta
    \end{equation}
    \vspace{-2mm}
    holds with negligible higher-order residual terms.
\end{assumption}
This assumption aligns with Neural Tangent Kernel theory \citep{jacot2018neural} and is supported by other literature: in over-parameterized networks (especially LLMs), individual parameters change negligibly while their collective contribution to the final output is significant \citep{lee2019wide}.
% chizat2019lazy
% Empirical validation of this displacement is provided in \cref{fig:parameter-delta}. 

% Under Assumption \ref{ass:second-order-assumption}, the inner product of the parameter-space gradient $\nabla_\theta\mathcal{L}$ and the displacement toward the optimum $\theta-\theta^*$ is approximately equivalent to the inner product in the logit space: 
% \begin{equation}
% \label{eq:gradient-directionality}
%     \begin{aligned}
%         \langle\nabla_\theta\mathcal{L},\theta-\theta^*\rangle
%         &=\langle\nabla_{\boldsymbol{z}_\theta}\mathcal{L},\nabla_\theta \boldsymbol{z}_\theta^\top(\theta-\theta^*)\rangle \\
%         &\approx \langle \nabla_{\boldsymbol{z}_\theta}\mathcal{L}, \boldsymbol{z}_\theta - \boldsymbol{z}_{\theta^*} \rangle.
%     \end{aligned}
% \end{equation}
% The significance of \cref{eq:gradient-directionality} depends on the geometry of the loss in the logit space. Below, we first give the definition of convexity in the logit space, then contrast the logit space property of different objectives.
We now introduce the property of \emph{logits convexity}, which serves as the primary indicator for optimization stability.
% \begin{definition}[Logits Convexity]
%     \label{def:logits-convexity}
%     A twice-differentiable loss function $\mathcal{L}$ that takes logits $\boldsymbol{z}_\theta$ parameterized by $\theta$ as input, is said to be logits convex if its Hessian matrix with respect to the logits $\boldsymbol{z}_\theta$ is positive semi-definite for all $\boldsymbol{z}_\theta$.
% \end{definition}
\begin{definition}[Logits Convexity]
    \label{def:logits-convexity}
    Let $\mathcal{L}$ be a twice-differentiable loss function that takes logits $\boldsymbol{z}_\theta$ parameterized by $\theta$ as input. The $\mathcal{L}$ is called logits convex if its Hessian matrix with respect to all $\boldsymbol{z}_\theta$ is positive semi-definite.
\end{definition}
\vspace{-0.5mm}
The following proposition establishes why this property is desirable for stable parameter updates:
\begin{proposition}[Gradient Directionality]
    \label{prop:gradient-directionality}
    Under Assumption \ref{ass:second-order-assumption}, if $\mathcal{L}$ is logits convex, then the parameter-space gradient and logit-space gradient satisfy: 
    \vspace{-1.5mm}
    \begin{equation}
        \langle \nabla_\theta\mathcal{L}, \theta - \theta^* \rangle \approx 
        \langle \nabla_{\boldsymbol{z}_\theta}\mathcal{L}, \boldsymbol{z}_\theta - \boldsymbol{z}_{\theta^*} \rangle
        \geq 0.
    \end{equation}
\end{proposition}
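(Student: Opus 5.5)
The argument has two parts. First we convert the parameter-space inner product into a logit-space one using the chain rule and Assumption \ref{ass:second-order-assumption}. Then we bound the logit-space quantity from below using logits convexity together with the near-optimality of $\theta^*$.

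\emph{Chain rule and linearization.} Since $\mathcal{L}$ depends on $\theta$ only through $\boldsymbol{z}_\theta$, we have $\nabla_\theta\mathcal{L}=\nabla_\theta\boldsymbol{z}_\theta\,\nabla_{\boldsymbol{z}_\theta}\mathcal{L}$, with $\nabla_\theta\boldsymbol{z}_\theta$ the Jacobian arranged as in the assumption. Therefore
\begin{equation*}
\langle\nabla_\theta\mathcal{L},\theta-\theta^*\rangle=\langle\nabla_{\boldsymbol{z}_\theta}\mathcal{L},\nabla_\theta\boldsymbol{z}_\theta^\top(\theta-\theta^*)\rangle .
\end{equation*}
Assumption \ref{ass:second-order-assumption} gives $\nabla_\theta\boldsymbol{z}_\theta^\top(\theta-\theta^*)=-\nabla_\theta\boldsymbol{z}_\theta^\top\Delta\theta\approx\boldsymbol{z}_\theta-\boldsymbol{z}_{\theta^*}$. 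Substituting yields the approximate equality in the statement. The error is $\langle\nabla_{\boldsymbol{z}_\theta}\mathcal{L},O(\|\Delta\theta\|^2)\rangle$, which the assumption declares negligible.

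\emph{Sign via convexity.} A positive semi-definite Hessian everywhere in logit space (Definition \ref{def:logits-convexity}) makes $\mathcal{L}$ convex as a function of $\boldsymbol{z}$. The first-order characterization of convexity then gives
\begin{equation*}
\mathcal{L}(\boldsymbol{z}_{\theta^*})\ge\mathcal{L}(\boldsymbol{z}_\theta)+\langle\nabla_{\boldsymbol{z}_\theta}\mathcal{L},\boldsymbol{z}_{\theta^*}-\boldsymbol{z}_\theta\rangle .
\end{equation*}
I would derive this from Taylor's theorem with integral remainder along the segment joining the two points. Rearranging,
\begin{equation*}
\langle\nabla_{\boldsymbol{z}_\theta}\mathcal{L},\boldsymbol{z}_\theta-\boldsymbol{z}_{\theta^*}\rangle\ge\mathcal{L}(\boldsymbol{z}_\theta)-\mathcal{L}(\boldsymbol{z}_{\theta^*}).
\end{equation*}
Because $\theta^*\in\Theta^*$, we have $\mathcal{L}(\boldsymbol{z}_{\theta^*})<\mathcal{L}^*+\epsilon\le\mathcal{L}(\boldsymbol{z}_\theta)+\epsilon$. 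The right-hand side is therefore bounded below by $-\epsilon$, and it is nonnegative whenever $\theta$ itself is not yet $\epsilon$-optimal, i.e.\ $\mathcal{L}(\boldsymbol{z}_\theta)\ge\mathcal{L}^*+\epsilon$.

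\emph{Main obstacle.} The difficulty lies in reading ``$\ge 0$'' precisely, since near-optimality only gives the bound up to $\epsilon$. I would first state the inequality with the explicit lower bound $\mathcal{L}(\boldsymbol{z}_\theta)-\mathcal{L}(\boldsymbol{z}_{\theta^*})\ge-\epsilon$. I would then treat the nontrivial regime, where the current iterate is outside $\Theta^*$; there the bound is genuinely nonnegative, and in fact strictly positive. This is the regime in which gradient directionality matters. The vanishing $\epsilon$ slack can be absorbed into the same ``$\approx$'' already used for the Taylor step.
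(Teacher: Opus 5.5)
Your proof is correct and follows the paper's route: the chain rule plus the first-order linearization of Assumption~\ref{ass:second-order-assumption} move the inner product into logit space, and the first-order characterization of convexity then gives the sign. Your version is more careful than the paper's. The paper asserts $\langle\nabla_{\boldsymbol{z}_\theta}\mathcal{L},\boldsymbol{z}_\theta-\boldsymbol{z}_{\theta^*}\rangle\ge 0$ directly from convexity, but convexity only gives the lower bound $\mathcal{L}(\boldsymbol{z}_\theta)-\mathcal{L}(\boldsymbol{z}_{\theta^*})$, which is guaranteed nonnegative in the regime $\mathcal{L}(\boldsymbol{z}_\theta)\ge\mathcal{L}^*+\epsilon$ that you isolate, and is otherwise bounded below only by $-\epsilon$.
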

\vspace{-2mm}
Derivation can be found in Appendix \ref{sec:proof-of-gradient-directionality}. Proposition \ref{prop:gradient-directionality} implies that the negative gradient of the loss $\mathcal{L}$ with logits convexity consistently points toward the near-optimal parameter $\theta^*$ in a local sense. This provides a theoretical guarantee that gradient descent on those objectives is not misled by spurious stationary points that may arise from non-convexity in the parameter landscape.
Below, we analyze the logits convexity of SFT, PPO, and LCO objectives:
\begin{lemma}
    \label{lemma:ppo-logits-convexity}
    The SFT loss function $\mathcal{L}_\text{SFT}$, as defined in \cref{eq:sft-objective}, is logits convex at each time step. In contrast, the PPO loss function $\mathcal{L}_\text{PPO}$, defined in \cref{eq:ppo-objective}, is not logits convex at any time step. (Proof. See Appendix \ref{sec:proof-of-sft-convexity}.)
\end{lemma}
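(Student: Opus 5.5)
The plan is to compute the Hessian of each loss with respect to the logit vector $\boldsymbol{z}=\boldsymbol{z}_\theta(s_t,\cdot)$ at a single time step. Since $\mathbb{E}_t$ is an empirical average over time steps, and each term depends only on its own logit block, it suffices to work with one time step. Throughout, write $\boldsymbol{p}=\mathrm{softmax}(\boldsymbol{z})$, so that $p_k=\pi_\theta(a_k|s_t)$, and let $\boldsymbol{e}_a$ be the one-hot vector of the target or sampled action. The basic ingredient is the softmax Jacobian $\partial p_k/\partial z_j = p_k(\mathbb{I}_{k=j}-p_j)$, i.e.\ $\nabla_{\boldsymbol{z}}\boldsymbol{p}=\mathrm{diag}(\boldsymbol{p})-\boldsymbol{p}\boldsymbol{p}^\top$.

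\textbf{SFT.} From \cref{eq:sft-logit-gradient} the logit gradient is $\boldsymbol{p}-\boldsymbol{e}_a$. Differentiating once more gives $H_\text{SFT}=\mathrm{diag}(\boldsymbol{p})-\boldsymbol{p}\boldsymbol{p}^\top$. To show this is positive semi-definite, take any $\boldsymbol{v}$ and compute
\begin{equation*}
\boldsymbol{v}^\top H_\text{SFT}\boldsymbol{v}=\sum_k p_k v_k^2-\Big(\sum_k p_k v_k\Big)^2 .
\end{equation*}
This is the variance of $v$ under the distribution $\boldsymbol{p}$, so it is nonnegative by Jensen or Cauchy--Schwarz. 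The bound holds for every $\boldsymbol{z}$, hence at every time step.

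\textbf{PPO.} In the active (unclipped) region, \cref{eq:ppo-logit-gradient} gives the gradient $c\,p_a(\boldsymbol{p}-\boldsymbol{e}_a)$ with $c=A(s_t,a_t)/\pi_\text{old}(a_t|s_t)$. In other words, the loss is $-c\,p_a$, and its Hessian is $H_\text{PPO}=-c\,\nabla^2_{\boldsymbol{z}}p_a$. Using $\nabla_{\boldsymbol{z}} p_a=p_a(\boldsymbol{e}_a-\boldsymbol{p})$, I get
\begin{equation*}
\nabla^2_{\boldsymbol{z}}p_a=p_a(\boldsymbol{e}_a-\boldsymbol{p})(\boldsymbol{e}_a-\boldsymbol{p})^\top-p_a\big(\mathrm{diag}(\boldsymbol{p})-\boldsymbol{p}\boldsymbol{p}^\top\big).
\end{equation*}
To show $H_\text{PPO}$ is not PSD, I will evaluate quadratic forms in two test directions.
\begin{itemize}
\item Take $\boldsymbol{u}=\boldsymbol{e}_a-\boldsymbol{p}$. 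After simplification, $\boldsymbol{u}^\top\nabla^2 p_a\,\boldsymbol{u}$ reduces to a scalar depending on $p_a$ and $\sum_k p_k^2$. By the one-dimensional reduction below, it is positive when $p_a<1/2$ and negative when $p_a>1/2$.
\item Take a direction $\boldsymbol{w}$ with $w_a=0$ and $\sum_k p_k w_k=0$, which mixes two non-sampled actions. Then $\boldsymbol{w}^\top\nabla^2 p_a\,\boldsymbol{w}=-p_a\sum_k p_kw_k^2<0$.
\end{itemize}
So $\nabla^2 p_a$ has a strictly negative direction whenever $|\mathcal{V}|\ge 3$. It also has a strictly positive direction, at least in the regime $p_a<1/2$.

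Hence $\nabla^2 p_a$ is indefinite. Multiplying by $-c$ for either sign of $A\neq 0$ leaves it indefinite, so $H_\text{PPO}$ has a negative eigenvalue. The cleaner route is a one-dimensional reduction: along $\boldsymbol{z}+s\boldsymbol{e}_a$ one has $p_a(s)=\sigma(s+\text{const})$, whose second derivative $\sigma(1-\sigma)(1-2\sigma)$ changes sign at $p_a=1/2$. This also connects to the empirical remark about probabilities near $0.5$.

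The main obstacle is the quantifier \emph{not logits convex at any time step}, which must cover every sign of $A$ and every configuration. I will treat it as follows:
\begin{itemize}
\item For $A<0$, the loss is $|c|\,p_a$. Along the ray $\boldsymbol{z}+s\boldsymbol{e}_a$, $p_a$ is sigmoidal and hence concave for $p_a>1/2$, so convexity fails at such logits.
\item For $A>0$, the loss is $-|c|\,p_a$. The sigmoid is convex for $p_a<1/2$, so the loss is concave there, and convexity again fails.
\end{itemize}
In either case there exist logits in the domain where the Hessian has a negative eigenvalue. This contradicts Definition~\ref{def:logits-convexity}, which demands PSD at all $\boldsymbol{z}_\theta$. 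The clipped region, where the gradient is zero and the loss constant, cannot restore global convexity, because it appears only as a flat piece. Finally, if $A=0$ the loss is trivially constant, so I will note that case as excluded or degenerate.
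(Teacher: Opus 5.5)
Your SFT argument is the paper's: the Hessian is $\mathrm{diag}(\boldsymbol{p})-\boldsymbol{p}\boldsymbol{p}^\top$ and its quadratic form is a variance.

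For PPO you reach the same Hessian as the paper. You correctly keep the positive factor $p_a$, which the paper's displayed $\boldsymbol{H}_\text{PPO}$ drops; this does not affect signs. After that your argument differs from the paper's. The paper writes the quadratic form for a general $\boldsymbol{v}$ as a multiple of $\mathbb{D}(\boldsymbol{v})-(v_k-\mathbb{E}(\boldsymbol{v}))^2$ and reads off a sign condition on $\boldsymbol{v}$. It never shows that a $\boldsymbol{v}$ meeting that condition exists, or at which logits.

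Your concrete test directions fill that in. The ray $\boldsymbol{z}+s\boldsymbol{e}_a$ gives $\boldsymbol{e}_a^\top\nabla^2_{\boldsymbol{z}}p_a\,\boldsymbol{e}_a=p_a(1-p_a)(1-2p_a)$. Your $\boldsymbol{w}$ is exactly the case $v_k=\mathbb{E}(\boldsymbol{v})$ of the paper's decomposition.

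The existence question is not vacuous. Since $\mathbb{D}(\boldsymbol{v})\ge\frac{p_a}{1-p_a}(v_a-\mathbb{E}(\boldsymbol{v}))^2$, for $A>0$ the Hessian is actually PSD wherever $p_a\ge 1/2$. So the failure of logits convexity rests on the ``for all $\boldsymbol{z}_\theta$'' quantifier in Definition~\ref{def:logits-convexity}. You handle that quantifier explicitly, along with the degenerate cases $A=0$ and $|\mathcal{V}|=2$; the paper does not. In exchange, the paper's formula gives one expression covering every direction and both signs of $A$.

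\textbf{Error to remove.} Delete your first test-direction bullet; it is false for $|\mathcal{V}|\ge 3$. Along $\boldsymbol{u}=\boldsymbol{e}_a-\boldsymbol{p}$ the form is $p_a\big(\|\boldsymbol{u}\|^4-\mathrm{Var}_{\boldsymbol{p}}(\boldsymbol{u})\big)$. This also involves $\sum_k p_k^3$, and its sign does not switch at $p_a=1/2$. At $\boldsymbol{p}=(0.5,0.3,0.2)$ the bracket is $0.1444-0.1456<0$, so by continuity it is negative for some $p_a<1/2$. The positive direction has to be $\boldsymbol{e}_a$. Your final argument already uses $\boldsymbol{e}_a$, so nothing downstream breaks.

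\textbf{Point to make explicit.} State that your negative-curvature logits lie in the unclipped region, since that is what lets you use the smooth formula.
For $A<0$ the unclipped region is $p_a>(1-\epsilon)\pi_\text{old}(a_t|s_t)$, which contains all $p_a$ near $1$.
For $A>0$ it is $p_a<(1+\epsilon)\pi_\text{old}(a_t|s_t)$, which contains all $p_a$ near $0$.
Convexity on all of $\mathbb{R}^{|\mathcal{V}|}$ would imply convexity on this open set, where the loss equals $-c\,p_a$, and you have exhibited negative curvature there.
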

These findings elucidate the disparity in gradient stability observed between SFT and PPO. The non-convexity of the PPO objective in logit space can lead to a violation of gradient directionality, resulting in the erratic gradient spikes and explosions frequently encountered in empirical settings. In contrast, LCO restores this desirable optimization property: 
\begin{lemma}
    \label{lemma:lco-logits-convexity}
    The LCO loss functions $\mathcal{L}_\text{LCO-MSE}$, $\mathcal{L}_\text{LCO-LCH}$, and $\mathcal{L}_\text{LCO-KLD}$, defined in \cref{eq:lco-mse-objective,eq:lco-log-cosh-objective,eq:lco-kld-objective}, are all logits convex at each time step. Specifically, $\mathcal{L}_\text{LCO-MSE}$ is strongly convex, while $\mathcal{L}_\text{LCO-LCH}$ is locally strongly convex on any bounded set of logits. (Proof. See Appendix \ref{sec:proof-of-lco-mse-convexity}.)
\end{lemma}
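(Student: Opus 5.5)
The plan is to compute, for each of the three objectives, the Hessian with respect to the logit vector $\boldsymbol{z}_\theta(s_t,\cdot)\in\mathbb{R}^{|\mathcal{V}|}$ at a fixed time step $t$. We treat the targets $z^*$ and $\pi^*$ as constants: by Proposition~\ref{prop:trpo-optimal-policy} they depend only on $z_\text{old}$, $\pi_\text{old}$ and $A$, not on $\theta$. Since $\mathbb{E}_t$ is an average of per-step terms, and each term depends only on its own logit block, the full Hessian is block diagonal with nonnegative weights. It therefore suffices to show that each per-step Hessian is positive semi-definite, with the stronger statements checked per step as well.

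\textbf{MSE.} The per-step loss is $\frac{1}{|\mathcal{V}|}\sum_a (z_a - z^*_a)^2$, where $z_a$ abbreviates $z_\theta(s_t,a)$. It is separable, so its Hessian is $\frac{2}{|\mathcal{V}|} I$. This matrix is positive definite with smallest eigenvalue $2/|\mathcal{V}|$, which gives strong convexity with modulus $2/|\mathcal{V}|$.

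\textbf{Log-cosh.} The per-step loss is again separable. Using $\frac{d}{dx}\log\cosh x=\tanh x$ and $\frac{d^2}{dx^2}\log\cosh x=\operatorname{sech}^2 x>0$, the Hessian is the diagonal matrix $\frac{1}{|\mathcal{V}|}\operatorname{diag}\big(\operatorname{sech}^2(z_a - z^*_a)\big)$, which is positive semi-definite everywhere. For local strong convexity, restrict to a bounded set $\|\boldsymbol{z}\|_\infty\le R$. Because $z^*$ is a fixed finite vector, $|z_a - z^*_a|\le R+\|z^*\|_\infty =: M$, and hence every diagonal entry is at least $\operatorname{sech}^2(M)/|\mathcal{V}|>0$. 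Strong convexity cannot hold globally, since $\operatorname{sech}^2\to 0$ as the argument grows; this is why the statement claims only local strong convexity.

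\textbf{KLD.} Expand the loss as $\sum_a \pi^*_a\log\pi^*_a - \sum_a \pi^*_a z_a + \log\sum_{a'} e^{z_{a'}}$, using $\sum_a \pi^*_a = 1$. The first term is constant and the second is linear in $\boldsymbol{z}$, so the Hessian equals that of log-sum-exp, namely $\operatorname{diag}(\pi_\theta)-\pi_\theta\pi_\theta^\top$. For any $v$, $v^\top(\operatorname{diag}(\pi_\theta)-\pi_\theta\pi_\theta^\top)v=\sum_a \pi_{\theta,a} v_a^2-(\sum_a \pi_{\theta,a} v_a)^2 = \operatorname{Var}_{a\sim\pi_\theta}(v_a)\ge 0$, by Jensen/Cauchy--Schwarz. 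This is the same computation that underlies the SFT half of Lemma~\ref{lemma:ppo-logits-convexity}; the SFT case is the special case in which $\pi^*$ is a one-hot distribution. The Hessian is only semi-definite, with null direction $v=\mathbf{1}$, which is consistent with the lemma claiming plain convexity for this loss.

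None of the three steps is a real obstacle. The one point needing care is to state explicitly that the targets are detached constants; otherwise the Hessian picks up cross terms through $\theta$. Beyond that, the bounded-set argument for log-cosh must produce a uniform lower bound on $\operatorname{sech}^2$, and this relies on $z^*$ being finite, which holds whenever the advantages $A$ are finite and $\beta>0$.
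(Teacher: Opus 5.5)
Your proposal is correct and matches the paper's proof. The paper computes the same per-step logit Hessians: $\frac{2}{|\mathcal{V}|}\boldsymbol{I}$ for MSE; $\text{diag}\bigl(\frac{1}{|\mathcal{V}|}\text{sech}^2(\boldsymbol{z}_\theta-\boldsymbol{z}^*)\bigr)$ for log-cosh, lower-bounded by $\frac{1}{|\mathcal{V}|}\text{sech}^2(R)$ on $\|\boldsymbol{z}_\theta-\boldsymbol{z}^*\|_\infty\le R$; and $\text{diag}(\boldsymbol{\pi}_\theta)-\boldsymbol{\pi}_\theta\boldsymbol{\pi}_\theta^\top$ for KLD, reduced to the SFT variance argument. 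Your log-sum-exp route to the KLD Hessian and your explicit treatment of $\boldsymbol{z}^*$ and $\pi^*$ as constants are only cosmetic differences.
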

While standard PPO attempts to mitigate this instability through heuristic constraints like clipping or trust regions, the LCO objectives offer a more fundamental solution grounded in Proposition \ref{prop:gradient-directionality}. By ensuring convexity in the logit space, the parameter-space optimization naturally inherits a well-behaved landscape, ensuring that the model updates remain stable throughout the training process.

% Beyond demonstrating that logits convexity ensures favorable gradient directionality, we further show that the gradient norm upper bounds of the LCO objectives decrease monotonically as the loss diminishes:
Complementing the guarantee of favorable gradient directionality, we further establish that the gradient norms of LCO objectives are bounded by monotonic functions of the loss. This property ensures gradient magnitude naturally dissipates as the model approaches the optimal target:
\begin{proposition}[Gradient Norm Upper Bounds]
    \label{prop:gradient-norm-upper-bound}
    We consider LCO objectives defined in \cref{eq:lco-mse-objective,eq:lco-log-cosh-objective,eq:lco-kld-objective} at time step $t$. Let $\sigma_{\max}$ denote the maximum singular value of  $\nabla_\theta\boldsymbol{z}_\theta$. The gradient norm upper bound of $\mathcal{L}_\text{LCO-MSE}$ is:
    \vspace{-1mm}
    \begin{equation}
        \|\nabla_\theta\mathcal{L}_\text{LCO-MSE}\|\leq\frac{2}{|\mathcal{V}|}\sigma_\text{max}\sqrt{|\mathcal{V}|\mathcal{L}_\text{LCO-MSE}}.
    \end{equation}
    \vspace{-1mm}
    The gradient norm upper bound of $\mathcal{L}_\text{LCO-LCH}$ is:
    \vspace{-1mm}
    \begin{equation}
        \|\nabla_\theta\mathcal{L}_\text{LCO-LCH}\|\leq\frac{1}{|\mathcal{V}|}\sigma_\text{max}\sqrt{|\mathcal{V}|(1-\exp(-2\mathcal{L}_\text{LCO-LCH}))}.
    \end{equation}
    \vspace{-1mm}
    The gradient norm upper bound of $\mathcal{L}_\text{LCO-KLD}$ is:
    \vspace{-1mm}
    \begin{equation}
        \|\nabla_\theta\mathcal{L}_\text{LCO-KLD}\|\leq\sigma_\text{max}\sqrt{2\mathcal{L}_\text{LCO-KLD}}.
    \end{equation}
\end{proposition}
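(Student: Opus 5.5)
The plan is to work at a single time step, where $\mathcal{L}$ depends on $\theta$ only through the logit vector $\boldsymbol{z}_\theta$, and to split each bound into a parameter-to-logit factor and a logit-space factor. By the chain rule, $\nabla_\theta\mathcal{L}=\nabla_\theta\boldsymbol{z}_\theta\,\nabla_{\boldsymbol{z}_\theta}\mathcal{L}$, with $\nabla_\theta\boldsymbol{z}_\theta$ the Jacobian arranged as in Assumption~\ref{ass:second-order-assumption}. Since the operator norm of a matrix equals that of its transpose, namely $\sigma_{\max}$, we get
\begin{equation*}
\|\nabla_\theta\mathcal{L}\|\le\sigma_{\max}\,\|\nabla_{\boldsymbol{z}_\theta}\mathcal{L}\|_2 .
\end{equation*}
It then remains to bound the logit gradient of each LCO loss by a monotone function of the loss value. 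Throughout, write $d_a=z_\theta(s_t,a)-z^*(s_t,a)$ and $V=|\mathcal{V}|$.

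\textbf{LCO-MSE.} Here $\partial\mathcal{L}/\partial z_\theta(s_t,a)=2d_a/V$, so
\begin{equation*}
\|\nabla_{\boldsymbol{z}}\mathcal{L}\|_2=\frac{2}{V}\|\boldsymbol{d}\|_2 .
\end{equation*}
Since $\|\boldsymbol{d}\|_2^2=V\mathcal{L}_\text{LCO-MSE}$, this gives the bound exactly, with equality in the logit-space step.

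\textbf{LCO-LCH.} The derivative of $\log\cosh$ is $\tanh$, so $\|\nabla_{\boldsymbol{z}}\mathcal{L}\|_2=\frac1V\sqrt{\sum_a\tanh^2 d_a}$. The key identity is
\begin{equation*}
\tanh^2 d=1-\operatorname{sech}^2 d=1-\exp(-2\log\cosh d).
\end{equation*}
Setting $\ell_a=\log\cosh d_a$, this gives $\sum_a\tanh^2 d_a=\sum_a g(\ell_a)$ with $g(x)=1-e^{-2x}$. Because $g$ is concave and $\frac1V\sum_a\ell_a=\mathcal{L}_\text{LCO-LCH}$, Jensen's inequality yields $\sum_a g(\ell_a)\le V\,g(\mathcal{L}_\text{LCO-LCH})$. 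This is exactly the claimed bound. I expect this step to be the main one needing care: one must spot the sech identity and apply Jensen in the correct direction, which works because of concavity.

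\textbf{LCO-KLD.} Since $\pi^*$ does not depend on $\theta$, the loss equals $-\sum_a\pi^*(a|s_t)\log\pi_\theta(a|s_t)$ up to a constant. By the same computation as in \cref{eq:sft-logit-gradient}, averaged over the target distribution, its logit gradient is $\pi_\theta(\cdot|s_t)-\pi^*(\cdot|s_t)$. Using $\|\cdot\|_2\le\|\cdot\|_1$ and Pinsker's inequality $\|\pi^*-\pi_\theta\|_1\le\sqrt{2\,\mathbb{D}_\text{KL}(\pi^*\|\pi_\theta)}$, we get $\|\nabla_{\boldsymbol{z}}\mathcal{L}\|_2\le\sqrt{2\mathcal{L}_\text{LCO-KLD}}$. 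Multiplying by $\sigma_{\max}$ finishes the proof.

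All three bounds are monotone increasing in the loss, which gives the intended interpretation. The statement is per time step, so the empirical average $\mathbb{E}_t$ does not enter. If it were included, the same argument would apply termwise, followed by Cauchy--Schwarz or Jensen across time steps.
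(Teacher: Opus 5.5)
Your proposal is correct and follows essentially the same route as the paper. Both arguments use the chain rule with the operator-norm bound by $\sigma_{\max}$, then handle each loss the same way: an exact computation $\|\boldsymbol{d}\|_2^2=|\mathcal{V}|\mathcal{L}_\text{LCO-MSE}$ for LCO-MSE, the identity $\tanh^2 d=1-\exp(-2\log\cosh d)$ with Jensen on the concave $1-e^{-2x}$ for LCO-LCH, and $\|\cdot\|_2\le\|\cdot\|_1$ with Pinsker for LCO-KLD. Your version is slightly cleaner in that it never invokes the first-order linearization assumption; the paper mentions it to treat the Jacobian as locally constant, but these pointwise bounds do not need it.
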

\vspace{-2mm}
\begin{proof}
    See Appendix \ref{sec:proof-of-grdient-upper-bound}.
\end{proof}
\vspace{-2mm}
Proposition \ref{prop:gradient-norm-upper-bound} demonstrates that LCO objectives provide an inherent self-stabilizing mechanism: the gradient updates scale with the remaining error and diminish progressively as the model converges. This behavior effectively precludes the sudden, high-magnitude gradient spikes that frequently destabilize traditional RL training. As visualized in \cref{fig:introduction-convex} and \cref{fig:introduction-convex-positive} in the Appendix, the gradient dynamics of $\mathcal{L}_\text{LCO-KLD}$ exhibit a smooth decay, empirically confirming that optimization pressure gracefully subsides as the parameterized policy aligns with the optimal target. This behavior ensures a stable and well-behaved convergence in reinforcement learning for LLMs.

% confirming our theoretical finding that as the parameterized policy aligns with the optimal target, the optimization pressure gracefully subsides, leading to a stable and well-behaved convergence trajectory.

% Proposition \ref{prop:gradient-norm-upper-bound} demonstrates that LCO objectives produce stable gradient updates that diminish progressively as training converges, thereby preventing sudden gradient spikes that could otherwise undermine the stability of RL training. \cref{fig:introduction-convex} visualizes the gradient dynamics of $\mathcal{L}_\text{LCO-KLD}$. As training converges, the magnitude of the parameter gradients diminishes, indicating stable gradient dynamics.

\begin{figure*}[t]
    \centering
    \vspace{-3mm}
    \includegraphics[width=0.9\linewidth]{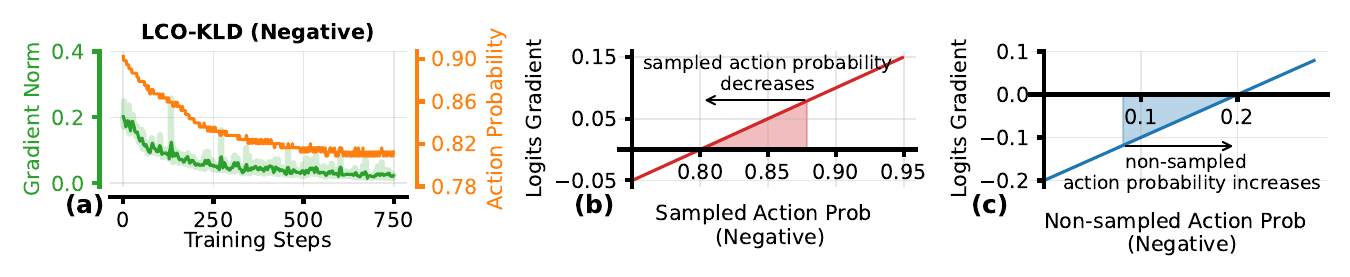}
    \vspace{-4mm}
    \caption{Training dynamics of LCO-KLD for actions with negative advantages. Unlike training dynamics of PPO, LCO‑KLD exhibits stable gradient updates that scale with the remaining error and diminish progressively as the model converges.}
    \label{fig:introduction-convex}
    \vspace{-5mm}
\end{figure*}

\subsection{Optimal Logits Analysis}

While the optimal logits for $\mathcal{L}_\text{LCO-MSE}$ and $\mathcal{L}_\text{LCO-LCH}$ are not unique due to the translation invariance of the softmax function, we demonstrate that the specific target defined in \cref{eq:optimal-logits} minimizes the convergence bound, thereby accelerating optimization process. Under Assumption \ref{ass:second-order-assumption}, let $\boldsymbol{J}=\nabla_\theta\boldsymbol{z}^\top_\theta$ denote the Jacobian matrix of the logits with respect to the parameters. For gradient descent updates $\theta_{k+1}=\theta_k-\eta\nabla_\theta\mathcal{L}$, following convergence bounds hold:
\begin{proposition}[Convergence Bound of $\mathcal{L}_\text{LCO-MSE}$]
    \label{prop:lco-mse-convergence-rate}
    Let \(\rho_\text{MSE}\) be the spectral radius of \(\boldsymbol{I} - \eta \frac{2}{|\mathcal{V}|} \boldsymbol{J} \boldsymbol{J}^\top\). If the step size \(\eta\) is sufficiently small so that \(\rho_\text{MSE} < 1\), then gradient descent on \(\mathcal{L}_\text{LCO-MSE}\) converges linearly:
    \vspace{-2.5mm}
    \begin{equation}
        \mathcal{L}_\text{LCO-MSE}(\theta_k)\leq\frac{1}{|\mathcal{V}|}\rho_\text{MSE}^{2k}\frac{\|\boldsymbol{A}\|^2}{\beta^2},
    \end{equation}
    where $\boldsymbol{A}$ is the advantage vector.
\end{proposition}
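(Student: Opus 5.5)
We work entirely in logit space via Assumption~\ref{ass:second-order-assumption}: the logits are treated as exactly linear in the parameters, so $\boldsymbol{z}_{\theta_{k}} - \boldsymbol{z}_{\theta_0} = \boldsymbol{J}(\theta_k-\theta_0)$ with a fixed Jacobian $\boldsymbol{J}$. Define the logit residual $\boldsymbol{e}_k = \boldsymbol{z}_{\theta_k} - \boldsymbol{z}^*$. By \cref{eq:lco-mse-objective}, at a single time step $\mathcal{L}_\text{LCO-MSE}(\theta_k)=\frac{1}{|\mathcal{V}|}\|\boldsymbol{e}_k\|^2$. The plan is to derive a linear recursion for $\boldsymbol{e}_k$, bound its norm by powers of $\rho_\text{MSE}$, and then evaluate the initial residual.

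\textbf{Step 1 (recursion).} The chain rule gives $\nabla_\theta\mathcal{L}_\text{LCO-MSE} = \frac{2}{|\mathcal{V}|}\boldsymbol{J}^\top\boldsymbol{e}_k$. Substituting the update $\theta_{k+1}=\theta_k-\eta\nabla_\theta\mathcal{L}$ into the linearization yields
\begin{equation*}
\boldsymbol{e}_{k+1} = \boldsymbol{e}_k - \eta\tfrac{2}{|\mathcal{V}|}\boldsymbol{J}\boldsymbol{J}^\top\boldsymbol{e}_k = \Bigl(\boldsymbol{I}-\eta\tfrac{2}{|\mathcal{V}|}\boldsymbol{J}\boldsymbol{J}^\top\Bigr)\boldsymbol{e}_k .
\end{equation*}
Iterating gives $\boldsymbol{e}_k = \boldsymbol{M}^k\boldsymbol{e}_0$, where $\boldsymbol{M}=\boldsymbol{I}-\eta\frac{2}{|\mathcal{V}|}\boldsymbol{J}\boldsymbol{J}^\top$.

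\textbf{Step 2 (contraction).} The matrix $\boldsymbol{M}$ is symmetric, so its operator norm equals its spectral radius $\rho_\text{MSE}$. Hence $\|\boldsymbol{e}_k\|\le\rho_\text{MSE}^k\|\boldsymbol{e}_0\|$, and therefore $\mathcal{L}_\text{LCO-MSE}(\theta_k)\le\frac{1}{|\mathcal{V}|}\rho_\text{MSE}^{2k}\|\boldsymbol{e}_0\|^2$.

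\textbf{Step 3 (initial residual).} Training starts from the behavioral policy, so $\boldsymbol{z}_{\theta_0}=\boldsymbol{z}_\text{old}$. By \cref{eq:optimal-logits} in Proposition~\ref{prop:trpo-optimal-policy}, $\boldsymbol{e}_0=-\boldsymbol{A}/\beta$, so $\|\boldsymbol{e}_0\|^2=\|\boldsymbol{A}\|^2/\beta^2$. This gives the stated bound.

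\textbf{Main obstacle.} The delicate point is the requirement $\rho_\text{MSE}<1$. When $\boldsymbol{J}\boldsymbol{J}^\top$ is singular, which happens for instance when $|\mathcal{V}|$ exceeds the effective rank, $\boldsymbol{M}$ has eigenvalue $1$ and the condition cannot hold. The statement therefore implicitly assumes that $\boldsymbol{J}\boldsymbol{J}^\top$ is positive definite, as in the over-parameterized NTK regime, together with $\eta<|\mathcal{V}|/\lambda_{\max}(\boldsymbol{J}\boldsymbol{J}^\top)$. If one instead wants to allow singular $\boldsymbol{J}\boldsymbol{J}^\top$, the fallback is to decompose $\boldsymbol{e}_0$ into components in the range and null space of $\boldsymbol{J}\boldsymbol{J}^\top$. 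The range component contracts at rate $\rho$ restricted to that subspace, while the null-space component stays fixed and produces an additive floor.

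It is this choice of $\boldsymbol{z}^*$ that makes $\|\boldsymbol{e}_0\|$ equal to $\|\boldsymbol{A}\|/\beta$, the minimal shift among the translation-equivalent targets discussed just before the proposition.
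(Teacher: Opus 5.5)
Your proof is correct and follows the paper's argument essentially step for step: linearize to get $\boldsymbol{e}_{k+1}=(\boldsymbol{I}-\eta\frac{2}{|\mathcal{V}|}\boldsymbol{J}\boldsymbol{J}^\top)\boldsymbol{e}_k$, iterate from $\boldsymbol{z}_{\theta_0}=\boldsymbol{z}_\text{old}$, and substitute $\boldsymbol{z}_\text{old}-\boldsymbol{z}^*=-\boldsymbol{A}/\beta$. You also make explicit two points the paper leaves tacit: symmetry of this matrix is what justifies replacing its operator norm by $\rho_\text{MSE}$, and the hypothesis $\rho_\text{MSE}<1$ forces $\boldsymbol{J}\boldsymbol{J}^\top\succ 0$; one caveat is that your closing claim that this $\boldsymbol{z}^*$ is the minimal-norm shift holds only when the advantages have zero mean over $\mathcal{V}$.
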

\begin{proposition}[Convergence Bound of $\mathcal{L}_\text{LCO-LCH}$]
    \label{prop:lco-lch-convergence-rate}
    Let \(\rho_\text{LCH}\) be the spectral radius of \(\boldsymbol{I} - \eta \frac{1}{|\mathcal{V}|} \boldsymbol{J} \boldsymbol{J}^\top\). In a neighborhood of the optimum where \(\|\boldsymbol{z}_\theta - \boldsymbol{z}^*\|\) is sufficiently small, and for small \(\eta\) such that \(\rho_\text{LCH} < 1\), gradient descent on \(\mathcal{L}_\text{LCO-LCH}\) converges linearly:
    \vspace{-2.5mm}
    \begin{equation}
        \mathcal{L}_\text{LCO-LCH}(\theta_k) \leq \frac{1}{2|\mathcal{V}|} \, \rho_\text{LCH}^{2k} \, \frac{\|\boldsymbol{A}\|^2}{\beta^2}.
    \end{equation}
\end{proposition}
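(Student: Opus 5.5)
We prove Proposition~\ref{prop:lco-lch-convergence-rate} by linearizing the log-cosh gradient near the optimum, so that the gradient-descent dynamics reduce to the MSE case of Proposition~\ref{prop:lco-mse-convergence-rate} with step-size factor $\frac{1}{|\mathcal{V}|}$ in place of $\frac{2}{|\mathcal{V}|}$. Write the residual as $\boldsymbol{e}_k=\boldsymbol{z}_{\theta_k}-\boldsymbol{z}^*$. At initialization $\theta_0=\theta_\text{old}$, so by \cref{eq:optimal-logits} we have $\boldsymbol{e}_0=-\boldsymbol{A}/\beta$ and $\|\boldsymbol{e}_0\|=\|\boldsymbol{A}\|/\beta$.

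\textbf{Step 1: gradient.} The parameter gradient is
\begin{equation*}
\nabla_\theta\mathcal{L}_\text{LCO-LCH}=\tfrac{1}{|\mathcal{V}|}\boldsymbol{J}^\top\tanh(\boldsymbol{e}_k),
\end{equation*}
with $\tanh$ applied elementwise.

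\textbf{Step 2: residual recursion.} Under Assumption~\ref{ass:second-order-assumption},
\begin{equation*}
\boldsymbol{e}_{k+1}\approx\boldsymbol{e}_k-\eta\boldsymbol{J}(\theta_{k+1}-\theta_k)\cdot\tfrac{1}{\eta}\cdot\eta
=\boldsymbol{e}_k-\tfrac{\eta}{|\mathcal{V}|}\boldsymbol{J}\boldsymbol{J}^\top\tanh(\boldsymbol{e}_k).
\end{equation*}
In the assumed neighborhood where $\|\boldsymbol{e}_k\|$ is small, $\tanh(\boldsymbol{e}_k)=\boldsymbol{e}_k+O(\|\boldsymbol{e}_k\|^3)$. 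To first order this gives
\begin{equation*}
\boldsymbol{e}_{k+1}\approx\big(\boldsymbol{I}-\tfrac{\eta}{|\mathcal{V}|}\boldsymbol{J}\boldsymbol{J}^\top\big)\boldsymbol{e}_k .
\end{equation*}
Because $\boldsymbol{J}\boldsymbol{J}^\top$ is symmetric, this iteration matrix is symmetric, so its operator norm equals its spectral radius $\rho_\text{LCH}$. Hence $\|\boldsymbol{e}_{k+1}\|\le\rho_\text{LCH}\|\boldsymbol{e}_k\|$, and by induction
\begin{equation*}
\|\boldsymbol{e}_k\|\le\rho_\text{LCH}^k\|\boldsymbol{A}\|/\beta .
\end{equation*}
Since $\rho_\text{LCH}<1$, the residual never grows, so the iterates stay inside the small-residual neighborhood and the linearization remains valid at every step.

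\textbf{Step 3: loss bound.} Use the elementary inequality $\log\cosh(x)\le x^2/2$, which holds for all $x$: the difference $x^2/2-\log\cosh x$ vanishes at $0$ and has derivative $x-\tanh x$, which has the sign of $x$. Summing over coordinates,
\begin{equation*}
\mathcal{L}_\text{LCO-LCH}(\theta_k)\le\frac{1}{2|\mathcal{V}|}\|\boldsymbol{e}_k\|^2\le\frac{1}{2|\mathcal{V}|}\rho_\text{LCH}^{2k}\frac{\|\boldsymbol{A}\|^2}{\beta^2}.
\end{equation*}

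\textbf{Main obstacle.} The delicate step is justifying that the cubic remainder of $\tanh$ does not break the contraction. The cleanest route avoids the Taylor remainder altogether. Write $\tanh(\boldsymbol{e})=\boldsymbol{D}\boldsymbol{e}$, where $\boldsymbol{D}=\mathrm{diag}(\tanh(e_i)/e_i)$ has entries in $(0,1]$. The iteration matrix becomes $\boldsymbol{I}-\tfrac{\eta}{|\mathcal{V}|}\boldsymbol{J}\boldsymbol{J}^\top\boldsymbol{D}$. However, this matrix need not be symmetric, so its norm is not directly controlled by $\rho_\text{LCH}$; bounding it would require either a weighted norm or the approximation $\boldsymbol{D}\approx\boldsymbol{I}$. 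Taking $\boldsymbol{D}\approx\boldsymbol{I}$ is exactly where the hypothesis that $\|\boldsymbol{z}_\theta-\boldsymbol{z}^*\|$ is sufficiently small is used. We will therefore state the result at the same first-order level of approximation as Assumption~\ref{ass:second-order-assumption}.
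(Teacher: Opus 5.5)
Your argument follows the paper's own proof: you linearize $\tanh$ near the optimum to reduce to the MSE recursion with factor $\frac{1}{|\mathcal{V}|}$, unroll from $\boldsymbol{e}_0=-\boldsymbol{A}/\beta$ using that the symmetric iteration matrix has operator norm $\rho_\text{LCH}$, and convert the residual bound to a loss bound, with your exact inequality $\log\cosh x\le x^2/2$ a small improvement over the paper's Taylor approximation $\log\cosh x\approx x^2/2$. One slip to fix: the middle expression in your Step~2 has the wrong sign and an extra factor of $\eta$ and should read $\boldsymbol{e}_{k+1}\approx\boldsymbol{e}_k+\boldsymbol{J}(\theta_{k+1}-\theta_k)$, though the recursion you end up with is correct.
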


\vspace{-6mm}
\begin{proof}
    See Appendix \ref{sec:appendix-lco-convergence}.
\end{proof}
\vspace{-3mm}

To justify our specific choice of $\boldsymbol{z}^*$, consider a candidate target $z'(s_t,a_t) = z_\text{old}(s_t,a_t) + A(s_t,a_t)/\beta + C$, where $C$ is an arbitrary constant shift. This shift modifies the numerator in the convergence bounds in Proposition \ref{prop:lco-mse-convergence-rate} and Proposition \ref{prop:lco-lch-convergence-rate} from $\|\boldsymbol{A}\|^2$ to $\|\boldsymbol{A}+C\|^2$, where $C$ is added to each element of $\boldsymbol{A}$. To achieve the tightest possible bound (i.e., the fastest guaranteed convergence), we seek to minimize this term with respect to $C$: $\arg\min_{C} \|\boldsymbol{A} + C\|^2$. The minimum is attained when $C = -\sum_{a_t \in \mathcal{V}} A(s_t, a_t)$. In modern reinforcement learning practice, it is standard to employ advantage normalization at each time step, which ensures that the mean $\frac{1}{|\mathcal{V}|}\sum_{a_t \in \mathcal{V}} A(s_t, a_t) = 0$. Under this condition, the optimal shift becomes zero, providing a rigorous theoretical justification for the definition of $\boldsymbol{z}^*$ in \cref{eq:optimal-logits}: it is the unique target that minimizes the initial logit-space discrepancy, thereby providing the tightest convergence bound and the most efficient optimization trajectory for both $\mathcal{L}_\text{LCO-MSE}$ and $\mathcal{L}_\text{LCO-LCH}$.

\vspace{-1mm}
\section{Experimental Setup}

%\paragraph{Evaluation Tasks} Our evaluation tasks include AMC23, MATH500  \citep{lightman2023let} and MinervaMath \citep{lewkowycz2022solving} for math reasoning, QA-Feedback \citep{wu2023fine} for machine reading comprehension, and AlpacaEval 2.0 \citep{dubois2023alpacafarm} for instruction following. For math reasoning tasks, we report Pass@1 performance. For QA-Feedback, we follow \citet{wu2023fine} to utilize relevance, factuality, and completeness RMs for evaluation, and then report the test set reward of policy on these RMs. For AlpacaEval, we assess the quality of policy response in comparison to GPT-4-1106-Preview, using win rate and length-controlled win rate (LC Win Rate) as metrics, with Azure GPT-4o-2024-11-20 serving as the judge. 
\vspace{-1mm}
\paragraph{Evaluation Tasks} We evaluate the LCO framework across three primary domains: mathematical reasoning (AMC23 \citep{maa2023amc}, MATH500 \citep{lightman2023let}, and MinervaMath \citep{lewkowycz2022solving}), machine reading comprehension (QA-Feedback \citep{wu2023fine}), and instruction following (AlpacaEval 2.0 \citep{dubois2023alpacafarm}). For math reasoning, we report the Pass@1 performance. For QA-Feedback, following \citet{wu2023fine}, we utilize reward models for relevance, factuality, and completeness, reporting the policy's reward scores on the test set. For AlpacaEval 2.0, we assess response quality relative to GPT-4-1106-Preview using win rate and length-controlled (LC) win rate, with Azure GPT-4o-2024-11-20 serving as judge.

%\paragraph{Training Data} We first introduce the datasets utilized for RL training. For math reasoning, we utilize the training instructions of MATH. For QA-Feedback, we adopt the original training instructions for RL traiing. For AlpacaEval 2.0, we utilize the instructions from UltraFeedback \citep{cui2023ultrafeedback} for RL training. 
%Next, we introduce the datasets used for reward models (RMs) training. For math reasoning, we utilize original training set of MATH. For each instruction, we leverage Llama-3-8B/70B and Qwen-2.5-32B/72B for sampling. Responses are evaluated against the labels, the correct responses are selected as chosen samples, while incorrect ones are marked as rejected. After filtering and deduplication, we obtain 30,183 pairwise samples. For QA-Feedback, we utilize original preference data from their training set, totaling 17,835 pairs. For AlpacaEval 2.0, we utilize UltraFeedback as preference data. 

\vspace{-2mm}
\paragraph{Training Data} We first detail the datasets employed for RL training. For mathematical reasoning, we utilize the original training instructions from the MATH dataset, with 500 samples randomly held out as an evaluation set. For QA-Feedback, we adopt its original training instructions. For AlpacaEval 2.0, we utilize instructions from UltraFeedback \citep{cui2023ultrafeedback}. Regarding RM training, the datasets are curated as follows. For mathematical reasoning, we sample responses from Llama-3 (8B/70B) and Qwen-2.5 (32B/72B) using MATH training instructions. Responses are labeled as chosen or rejected based on their correctness against ground truth, yielding 30,183 pairs after filtering. For QA-Feedback, we use its original preference data, totaling 17,835 pairs. For AlpacaEval 2.0, UltraFeedback serves as the primary preference dataset.
% For SFT warm-up, 1,000 chosen responses were randomly selected from each of the above datasets and train 1 epoch for initial instruction following capabilities.
For SFT warm-up, we select 1,000 chosen responses from each of the above datasets, and train the policy for one epoch to ensure initial instruction-following capabilities.

%\paragraph{Baselines} We compare our approach against several baseline methods, including REINFORCE \citep{williams1992simple}, PPO \citep{schulman2017proximal}, GRPO \citep{shao2024deepseekmath}, DAPO \citep{yu2025dapo} and GSPO \cite{zheng2025group}. For a more extensive evaluation, we also include on-policy distillation methods such as MiniLLM \citep{gu2023minillm} and GKD \citep{agarwal2024policy}. To ensure consistency, we follow the recommended configuration from their original papers as well as the default settings from widely used repositories \footnote{\url{https://github.com/huggingface/trl}}.

\vspace{-3mm}
\paragraph{Baselines} We evaluate our approach against several representative baselines, including REINFORCE \citep{williams1992simple} and PPO \citep{schulman2017proximal}, as well as GRPO \citep{shao2024deepseekmath} and its stability-enhanced variants, namely DAPO \citep{yu2025dapo} and GSPO \citep{zheng2025group}. To broaden the scope of evaluation, we also include on-policy distillation methods such as MiniLLM \citep{gu2023minillm} and GKD \citep{agarwal2024policy}. 
Furthermore, the RM and $\pi^*$ defined in \cref{eq:trpo-optimal-policy} are also included.
For a fair comparison, all methods are configured following their original reports and the default settings in the TRL library\footnote{\url{https://github.com/huggingface/trl}}.

%\paragraph{Reward Model Training} Following \citet{zhong2024dpo}, we train a LLM using DPO as RM to provide reward signals during RL training. To ensure a fair comparison, for on-policy distillation methods, a same RM is using as the teacher model. Our experiments cover various RM backbones, including Qwen-2.5-7B, Qwen-3-8B, Llama-3.1-8B and Mistral-3-8B. To mitigate overfitting, training is restricted to a single epoch with a learning rate 1e-6. 

\vspace{-3mm}
\paragraph{Training Details} Following \citet{zhong2024dpo}, we train an LLM via DPO to serve as the RM. In our main experiments, RMs are built on various backbone models, including Qwen-2.5-7B, Qwen-3-8B, Llama-3.1-8B, and Mistral-3-8B. RM training is conducted for a single epoch with a learning rate of 1e-6 to mitigate overfitting risks.

%For RL Training, our experiments cover various policy backbones, including Qwen-2.5-3B, Qwen-3-4B, Llama-3.2-3B and Mistral-3-3B. All policy models use RMs with the same backbone (i.e., Qwen-2.5-3B as the policy and Qwen-2.5-7B as the RM). We set the max sequence length to 2,048, with 4 responses generated per instruction for GRPO, DAPO, and GSPO, while one response generated per instruction for others. A sampling temperature of 0.6 and a top-$p$ value of 0.95 are consistently applied across all policies to control the diversity and quality of generated responses, as recommended by \citet{guo2025deepseek}. For LCO objectives, we set $\beta=1.0$ as a default value across all experiments. We set the learning rate to 5e-6 to ensure effective training for LCO.

Regarding RL training, we evaluate policy backbones including Qwen-2.5-3B, Qwen-3-4B, Llama-3.2-3B, and Mistral-3-3B. Each policy is consistently paired with an RM from its corresponding model series (e.g., a Qwen-2.5-3B policy with a Qwen-2.5-7B RM). We set the sequence length at 2,048. For GRPO, DAPO, and GSPO, we generate four responses per instruction, while other methods use one. Consistent with \citet{guo2025deepseek}, we apply a sampling temperature of 0.6 and top-$p$ of 0.95. For the LCO objectives, we adopt a default $\beta=1.0$ and a learning rate of 5e-6 for effective training. Evaluation results are reported as the mean and standard deviation across four random seeds.

\begin{table}[t]
    \centering
    \renewcommand{\arraystretch}{1.0}
    \vspace{-1mm}
    \caption{Main results of Qwen-2.5-3B and Qwen-3-4B on challenging math reasoning tasks with standard error. Best performances are shown in \textbf{bold}, while suboptimal ones are \underline{underlined}.}
    \vspace{-2.5mm}
    \label{tab:main-results-math}
    \begin{adjustbox}{width=0.99\linewidth}
        \begin{tabular}{l|ccc|ccc}
        \toprule
        \multirow{3}*{\textbf{Methods}} & \multicolumn{3}{c|}{\textbf{Qwen-2.5-3B}} & \multicolumn{3}{c}{\textbf{Qwen-3-4B}} \\
         & \textbf{MATH500} & \textbf{AMC23} & \textbf{MinervaMath} & \textbf{MATH500} & \textbf{AMC23} & \textbf{MinervaMath} \\
         & Pass@1 & Pass@1 & Pass@1 & Pass@1 & Pass@1 & Pass@1 \\
        \hline
        \hline
        $\phi_\text{DPO}$ & 58.30$\pm$1.83 & 35.70$\pm$2.37 & 16.17$\pm$1.56 & 69.20$\pm$2.31 & 58.20$\pm$3.18 & 16.54$\pm$1.34 \\
        $\pi^*$ & 51.30$\pm$1.53 & 28.40$\pm$2.49 & 15.18$\pm$0.81 & 61.35$\pm$1.67 & 47.80$\pm$3.10 & 14.39$\pm$0.71 \\
        \hline
        \hline
        SFT & 41.60$\pm$0.72 & 27.50$\pm$1.44 & 10.29$\pm$0.71 & 58.80$\pm$0.50 & 45.65$\pm$1.81 & 12.53$\pm$0.34 \\
        \hline
        REINFORCE & 54.60$\pm$0.53 & 40.15$\pm$1.25 & 12.35$\pm$0.93 & 64.80$\pm$0.68 & 46.65$\pm$2.43 & 19.48$\pm$0.41 \\
        PPO & 55.40$\pm$1.02 & 39.20$\pm$2.01 & 13.97$\pm$0.81 & 67.80$\pm$0.83 & 47.50$\pm$1.25 & 20.95$\pm$0.28 \\
        GRPO & 57.79$\pm$0.34 & \underline{41.75$\pm$1.25} & 14.89$\pm$0.87 & 67.60$\pm$1.23 & 51.95$\pm$1.92 & 21.75$\pm$0.19 \\
        DAPO & 57.05$\pm$0.31 & 40.20$\pm$1.46 & 13.23$\pm$0.78 & 67.80$\pm$0.78 & \underline{53.15$\pm$2.45} & 19.75$\pm$0.34 \\
        GSPO & 56.80$\pm$1.38 & 38.75$\pm$2.01 & 14.87$\pm$0.71 & 66.40$\pm$1.64 & 51.20$\pm$2.07 & 18.38$\pm$0.45 \\
        \hline
        \rowcolor{mylightblue} LCO-MSE & 59.20$\pm$1.75 & 40.20$\pm$1.58 & 15.12$\pm$0.43 & 67.20$\pm$1.35 & 51.30$\pm$1.23 & 21.37$\pm$0.36  \\
        \rowcolor{mylightblue} LCO-LCH & \textbf{61.40$\pm$2.01} & 39.50$\pm$2.37 & \underline{15.38$\pm$0.16} & \underline{69.40$\pm$1.98} & 52.70$\pm$2.45 & \textbf{24.26$\pm$0.21} \\
        \rowcolor{mylightblue} LCO-KLD & \underline{60.80$\pm$0.51} & \textbf{42.65$\pm$1.63} & \textbf{16.71$\pm$0.97} & \textbf{73.20$\pm$1.34} & \textbf{55.50$\pm$1.88} & \underline{23.95$\pm$0.31} \\
        \bottomrule
        \end{tabular}
    \end{adjustbox}
    \vspace{-4mm}
\end{table}

\section{Results and Analysis}

%For all main results, we report the average and standard deviation across 4 random seeds.

\subsection{Main Results}

\paragraph{Math Reasoning} 

\vspace{-1mm}
\cref{tab:main-results-math} summarizes the performance on mathematical reasoning tasks using Qwen-2.5-3B and Qwen-3-4B as base models. LCO objectives surpass competitive RL baselines 
% (e.g., REINFORCE, PPO, GRPO, DAPO, and GSPO) 
across most benchmarks such as MATH500 and MinervaMath. Notably, when built upon Qwen-2.5-3B, LCO-LCH yields the top Pass@1 score of 61.40\% on MATH500, while LCO-KLD excels on AMC23 (42.65\%) and MinervaMath (16.71\%). Similarly, using Qwen-3-4B, LCO-KLD achieves state-of-the-art performance on MATH500 (73.20\%) and AMC (55.50\%). 
Analysis of LCO variants reveals that while LCO-MSE is sensitive to advantage noise, LCO-LCH mitigates outlier influence through log-cosh penalty, contributing to its better performance on Qwen-3-4B. Meanwhile, LCO-KLD demonstrates the highest robustness by leveraging probabilistic consistency across the action space.
Furthermore, we observe that LCO surpasses the performance of $\pi^*$. Unlike a fixed target derived from the initial $\pi_{\text{old}}$ and advantage, LCO facilitates a dynamic optimization process that continuously aligns with evolving targets. Notably, most LCO variants even exceed the performance of the RM $\phi_\text{DPO}$ despite its larger parameter count. 
These results demonstrate the versatility of LCO in improving mathematical reasoning performance across diverse model backbones.

\vspace{-4mm}
\paragraph{Machine Reading Comprehension} 
We report the QA-Feedback results in \cref{tab:main-results-qa-feedback}, employing Llama-3.2-3B and Mistral-3-3B as backbones. Performance is measured by three oracle RMs: Relevance, Factuality, and Completeness. LCO variants consistently demonstrate superiority over all baselines. Specifically, LCO-KLD achieves the highest average rewards of 0.607 and 0.581 on Llama-3.2-3B and Mistral-3-3B, respectively, significantly surpassing PPO (0.503 and 0.525) and RM $\phi_\text{DPO}$ (0.552 and 0.556). Compared to SFT and $\pi^*$, LCO variants exhibit consistent gains, with LCO-KLD achieving 0.607 on Llama-3.2-3B and 0.581 on Mistral-3-3B. Moreover, all LCO variants surpass the competitive DAPO and GSPO baselines, especially in factuality (up to 0.817) and completeness (up to 0.620). These results highlight the effectiveness of LCO objectives in aligning policies to generate high-quality responses.

\begin{table}[t]
    \centering
    \renewcommand{\arraystretch}{1.0}
    \vspace{-1mm}
    \caption{Results for QA-Feedback in relevance (Rel.), factuality (Fact.), and completeness (Comp.) with standard error.}
    \vspace{-2.5mm}
    \label{tab:main-results-qa-feedback}
    \begin{adjustbox}{width=0.99\linewidth}
        \begin{tabular}{l|cccc|cccc}
        \toprule
        \multirow{3}*{\textbf{Methods}} & \multicolumn{4}{c|}{\textbf{Llama-3.2-3B}} & \multicolumn{4}{c}{\textbf{Mistral-3-3B}} \\
        & \textbf{Rel.} & \textbf{Fact.} & \textbf{Comp.} & \multirow{2}*{\textbf{Avg$(\uparrow)$}} & \textbf{Rel.} & \textbf{Fact.} & \textbf{Comp.} & \multirow{2}*{\textbf{Avg$(\uparrow)$}} \\
        & $R_1(\uparrow)$ & $R_2(\uparrow)$ & $R_3(\uparrow)$ & & $R_1(\uparrow)$ & $R_2(\uparrow)$ & $R_3(\uparrow)$ & \\
        \hline
        \hline
        $\phi_\text{DPO}$ & 0.369 & 0.737 & 0.549 & 0.552$\pm$0.009 & 0.412 & 0.658 & 0.559 & 0.556$\pm$0.013 \\
        $\pi^*$ & 0.321 & 0.691 & 0.514 & 0.509$\pm$0.010 & 0.416 & 0.597 & 0.528 & 0.514$\pm$0.003 \\
        \hline
        \hline
        SFT & 0.352 & 0.597 & 0.488 & 0.479$\pm$0.011 & 0.427 & 0.589 & 0.513 & 0.510$\pm$0.013 \\
        \hline
        REINFORCE & 0.280 & 0.632 & 0.506 & 0.473$\pm$0.005 & 0.416 & 0.629 & 0.536 & 0.527$\pm$0.011 \\ 
        PPO & 0.374 & 0.627 & 0.507 & 0.503$\pm$0.007 & \underline{0.431} & 0.614 & 0.531 & 0.525$\pm$0.002 \\
        GRPO & 0.342 & 0.708 & 0.516 & 0.522$\pm$0.009 & 0.410 & 0.622 & 0.529 & 0.520$\pm$0.005 \\
        DAPO & 0.337 & 0.721 & 0.516 & 0.525$\pm$0.003 & 0.412 & 0.625 & 0.528 & 0.522$\pm$0.003 \\
        GSPO & 0.332 & 0.724 & 0.516 & 0.524$\pm$0.006 & 0.392 & 0.639 & 0.522 & 0.518$\pm$0.007 \\
        \hline
        \rowcolor{mylightblue} LCO-MSE & 0.378 & 0.793 & 0.539 & 0.570$\pm$0.005 & 0.395 & \underline{0.669} & 0.552 & \underline{0.539$\pm$0.003} \\
        \rowcolor{mylightblue} LCO-LCH & \underline{0.383} & \underline{0.814} & \underline{0.543} & \underline{0.580$\pm$0.002} & 0.396 & 0.639 & \underline{0.577} & 0.537$\pm$0.004 \\
        \rowcolor{mylightblue} LCO-KLD & \textbf{0.392} & \textbf{0.817} & \textbf{0.612} & \textbf{0.607$\pm$0.003} & \textbf{0.449} & \textbf{0.673} & \textbf{0.620} & \textbf{0.581$\pm$0.006} \\ 
        \bottomrule
        \end{tabular}
    \end{adjustbox}
    \vspace{-2mm}
\end{table}

\begin{table}[t]
    \centering
    \renewcommand{\arraystretch}{1.0}
    \vspace{-0mm}
    \caption{Results for AlpacaEval 2.0 with standard error.}
    \vspace{-2.5mm}
    \label{tab:main-results-alpaca-eval}
    \begin{adjustbox}{width=0.99\linewidth}
        \begin{tabular}{l|cc|cc|cc}
        \toprule
        \multirow{2}*{\textbf{Methods}} & \multicolumn{2}{c|}{\textbf{Qwen-3-4B}} & \multicolumn{2}{c|}{\textbf{Llama-3.2-3B}} & \multicolumn{2}{c}{\textbf{Mistral-3-3B}} \\
        
        & \textbf{WR (\%)} & \textbf{LC WR (\%)} & \textbf{WR (\%)} & \textbf{LC WR (\%)} & \textbf{WR (\%)} & \textbf{LC WR (\%)} \\
        \hline
        \hline
        $\phi_\text{DPO}$ & 32.09$\pm$1.67 & 28.56$\pm$0.10 & 19.74$\pm$1.33 & 21.34$\pm$0.05 & 26.78$\pm$0.72 & 27.32$\pm$0.17 \\
        $\pi^*$ & 25.12$\pm$0.91 & 24.31$\pm$0.09 & 18.31$\pm$0.72 & 21.18$\pm$0.16 & 23.58$\pm$0.67 & 24.17$\pm$0.11 \\
        \hline
        \hline
        SFT & 23.99$\pm$1.49 & 25.95$\pm$0.01 & 17.73$\pm$1.25 & 18.04$\pm$0.12 & 19.39$\pm$1.31 & 22.29$\pm$0.07 \\
        \hline
        REINFORCE & 25.14$\pm$1.37 & 27.18$\pm$0.02 & 21.85$\pm$1.32 & 22.68$\pm$0.56 & 21.57$\pm$1.82 & 23.14$\pm$0.04 \\
        PPO & 26.79$\pm$1.45 & 27.20$\pm$0.07 & 23.74$\pm$1.21 & 25.76$\pm$0.71 & 25.81$\pm$1.45 & 26.41$\pm$0.05 \\
        GRPO & 25.37$\pm$1.65 & 26.38$\pm$0.21 & 23.72$\pm$1.32 & 24.71$\pm$0.43 & 25.70$\pm$1.23 & \textbf{27.81$\pm$0.09} \\
        DAPO & 24.67$\pm$1.37 & 25.18$\pm$0.18 & 22.07$\pm$1.22 & 24.81$\pm$0.34 & 24.91$\pm$1.19 & 26.71$\pm$0.08 \\
        GSPO & 25.67$\pm$1.12 & 26.17$\pm$0.11 & 24.17$\pm$1.98 & 25.13$\pm$0.32 & 23.85$\pm$1.71 & 27.31$\pm$0.13 \\
        \hline
        \rowcolor{mylightblue} LCO-MSE & 27.01$\pm$1.62 & 29.37$\pm$0.03 & 23.87$\pm$1.32 & \underline{25.86$\pm$0.32} & 25.31$\pm$1.01 & 26.87$\pm$0.04 \\
        \rowcolor{mylightblue} LCO-LCH & \underline{27.65$\pm$1.43} & \underline{31.91$\pm$0.02} & \textbf{25.37$\pm$1.21} & \textbf{26.82$\pm$0.09} & \underline{26.71$\pm$1.32} & \underline{27.41$\pm$0.02} \\
        \rowcolor{mylightblue} LCO-KLD & \textbf{29.05$\pm$1.49} & \textbf{32.93$\pm$0.03} & \underline{24.38$\pm$1.16} & 25.55$\pm$0.12 & \textbf{26.81$\pm$1.23} & 27.10$\pm$0.03 \\
        \bottomrule
        \end{tabular}
    \end{adjustbox}
    \vspace{-4mm}
\end{table}

%We report rewards from three oracle reward models (RMs) for QA-Feedback: Relevance, Factuality, and Completeness. The LCO objectives consistently outperform all baseline methods. Specifically, LCO-KLD achieves the highest average reward of 0.607 and 0.581 under the Llama-3.2-3B and Mistral-3-3B backbones, respectively, surpassing PPO's scores of 0.503 and 0.525. Compared to the SFT baseline, LCO-MSE, LCO-LCH, and LCO-KLD yield average reward improvements of 0.091, 0.101, and 0.128, respectively, on Llama-3.2-3B, and 0.029, 0.027, and 0.071 on Mistral-3-3B. Moreover, with the Llama-3.2-3B backbone, all three LCO variants exceed the strongest baseline, DAPO, particularly in factuality (0.793/0.814/0.817 vs. DAPO) and completeness rewards (0.539/0.543/0.614 vs. DAPO). These results indicate that the LCO objectives better align the policy to produce more accurate, comprehensive, and well-rounded responses.

\vspace{-4mm}
\paragraph{Instruction Following} \Cref{tab:main-results-alpaca-eval} illustrates that LCO objectives yield substantial gains in AlpacaEval 2.0. Specifically, LCO-KLD with a Qwen-3-4B backbone attains 29.05\% WR and 32.93\% LC WR, surpassing PPO (26.79\%/27.20\%), GRPO (25.37\%/26.38\%), and GSPO (25.67\%/26.17\%) by a clear margin. With Llama‑3.2‑3B, LCO‑LCH and LCO-KLD achieve their highest comparative win rates at 25.37\% and 24.38\%, while under Mistral‑3‑3B, LCO‑KLD achieves the highest win rate of 26.81\%. These consistent improvements across diverse model families underscore the robustness of LCO objectives in instruction-following tasks.

%As shown in \cref{tab:main-results-alpaca-eval}, LCO objectives yield strong performance gains on instruction-following tasks. Under the Qwen-3-4B backbone, LCO‑KLD attains win rates of 29.05\% (standard) and 32.93\% (LC) on AlpacaEval2.0, surpassing results from PPO (26.79\%/27.20\%), GRPO (25.37\%/26.38\%), and GSPO (25.67\%/26.17\%). With Llama‑3.2‑3B, LCO‑LCH reaches win rates of 25.37\% and 26.82\%, while under Mistral‑3‑3B, LCO‑KLD achieves the highest win rate of 26.81\%. These consistent improvements across multiple model families demonstrate the robustness of the LCO objectives in instruction-following settings.

\begin{figure*}[t]
    \centering
    \vspace{-3mm}
    \includegraphics[width=0.9\linewidth]{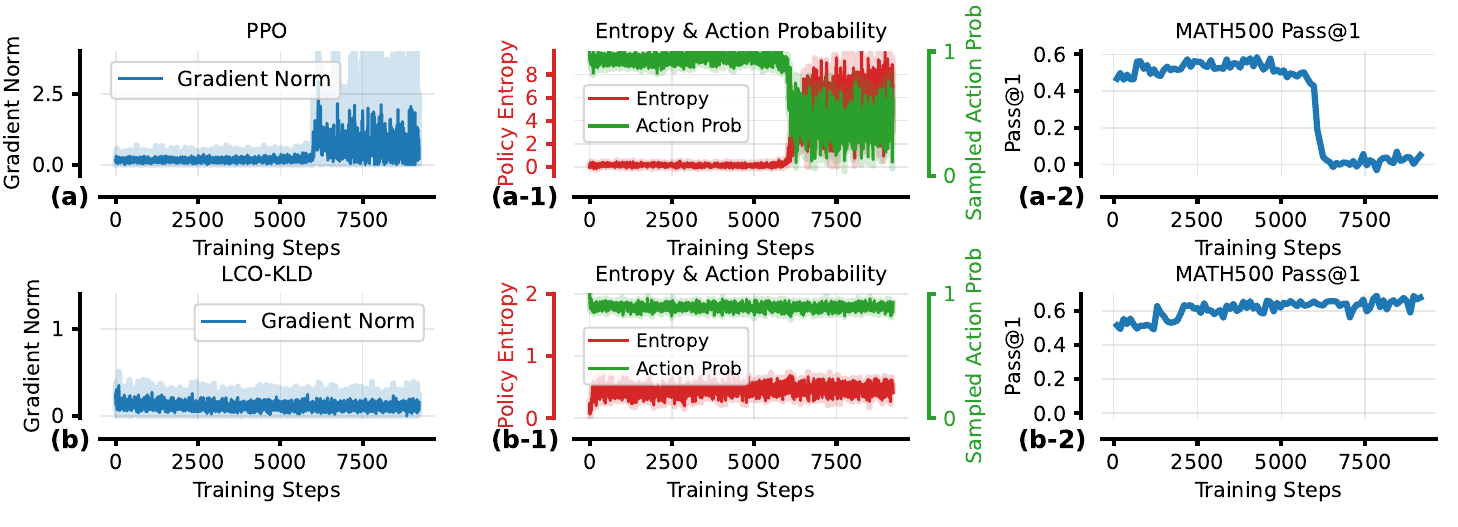}
    \vspace{-4.5mm}
    \caption{Training dynamics of LCO-KLD. \textbf{(a)} Gradient norm for actions with positive advantages. \textbf{(b)} Gradient norm for actions with negative advantages. Unlike training dynamics of PPO, these results demonstrate a stable gradient update of LCO-KLD during training.}
    \label{fig:training-dynamics-results}
    \vspace{-3mm}
\end{figure*}

\vspace{-1mm}
\subsection{Ablation Study and Analysis}
\vspace{-1mm}
%In this subsection, we conduct experiments to investigate the impact of different factors of LCO on RL training.

\paragraph{Training Dynamics Analysis}

To investigate how LCO stabilizes RL training process, we compare training dynamics of LCO-KLD and PPO in \cref{fig:training-dynamics-results}. (1) \textit{Gradient norms}: the gradient norm dynamics are illustrated in \cref{fig:training-dynamics-results}(a) and (b). As training progresses, PPO remains relatively stable during the early stages but begins to oscillate after approximately 6K steps. In contrast, the gradient norm of LCO-KLD remains stable throughout the entire training. (2) \textit{Entropy and action probabilities}: we analyze the average entropy of the action distribution and probabilities of sampled actions to evaluate the policy's exploration capability. As shown in \cref{fig:training-dynamics-results}(a-1) and (b-1), PPO exhibits a sharp drop in sampled action probabilities and a surge in entropy during later stages of training. 
This phenomenon is strongly linked to oscillations in gradient norms and likely stems from excessive gradients caused by negative advantages.
However, LCO-KLD maintains stable entropy and action probabilities, preserving exploration while ensuring effective optimization. (3) \textit{Performance}: as shown in \cref{fig:training-dynamics-results}(a-2) and (b-2), PPO suffers a performance decline in the later stages due to training collapse, whereas LCO-KLD achieves consistent improvement and ultimately surpasses PPO. 
% These results highlight that LCO-KLD not only boosts policy performance but also ensures training stability.

\begin{figure}[t]
    \centering
    \vspace{-3mm}
    \includegraphics[width=0.99\linewidth]{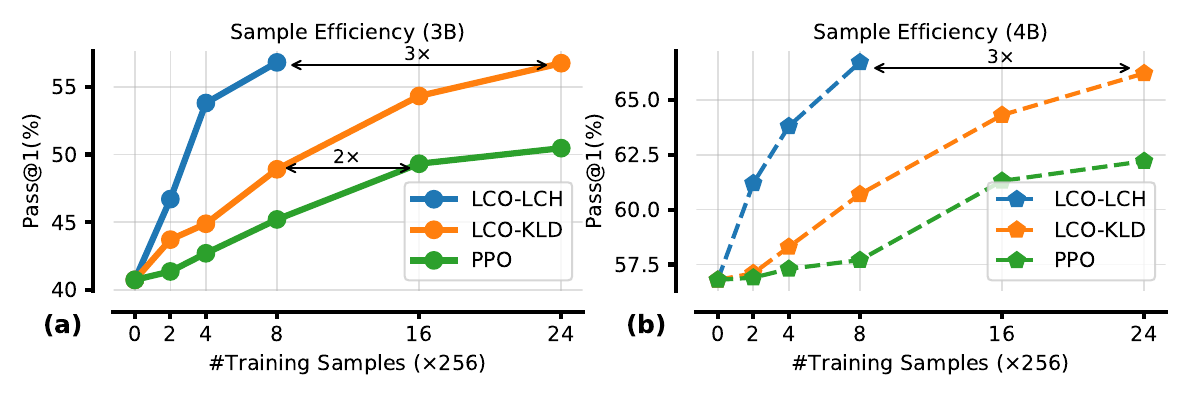}
    \vspace{-4mm}
    \caption{Training efficiency of LCO-LCH, LCO-KLD, and PPO, evaluated by policy accuracy on the MATH500 evaluation set.}
    \label{fig:training-efficiency}
    \vspace{-6mm}
\end{figure}

\paragraph{Training Efficiency} To assess the training efficiency of the LCO, we compare the performance of policies based on Qwen‑3‑4B and Qwen‑2.5‑3B backbones, trained using LCO‑LCH, LCO‑KLD, and PPO.
% as the number of training samples increases. 
As illustrated in \cref{fig:training-efficiency}(a), with the Qwen‑2.5‑3B backbone, LCO‑KLD achieves a Pass@1 comparable to PPO while requiring nearly half the training samples, owing to the logits convexity of LCO‑KLD, which accelerates convergence. LCO‑LCH further enhances sample efficiency. As depicted in \cref{fig:training-efficiency}(a) and (b), LCO‑LCH achieves a Pass@1 similar to LCO‑KLD but is nearly three times more sample-efficient. This greater efficiency stems from the stronger convexity of LCO‑LCH in the logit space (Lemma \ref{lemma:lco-logits-convexity}), ensuring a linear convergence rate (Proposition \ref{prop:lco-lch-convergence-rate}) and speeds up training.

% This improvement is attributed to the stronger convexity of LCO‑LCH at the logit space, backed up by Proposition \ref{prop:lco-lch-convergence-rate} which demonstrate LCO-LCH has a linear convergence rate, which further accelerates convergence.

%Furthermore, LCO‑LCH improves sample efficiency even more: as shown in \cref{fig:training-efficiency}(a) and (b), it reaches accuracy comparable to LCO‑KLD while being nearly three times as sample‑efficient. This gain can be attributed to the fact that LCO‑LCH is not only logits convex but strongly convex, further speeding up convergence.

\vspace{-3.5mm}
\paragraph{Compare to On-Policy Distillation} 
% To better understand the effectiveness of the LCO‑KLD objective, 
We further compare the LCO-KLD objective against two on‑policy distillation methods, GKD and MiniLLM. For a fair comparison, we use the same RMs ($\phi_\text{DPO}$) as teachers in these baselines, consistent with LCO-KLD. As presented in \cref{tab:main-results-math-distill}, LCO-KLD demonstrates competitive performance across three benchmarks. 
While GKD and MiniLLM primarily focus on mimicking the teacher distribution, LCO-KLD incorporates advantage information derived from original RL objective to guide the policy alignment. This performance gap underscores that LCO-KLD benefits from the dynamic nature of RL, which allow policy to potentially surpass the ``performance ceiling" often associated with traditional distillation.

% While GKD and MiniLLM are limited by the performance of their teacher models (see $\phi_\text{DPO}$ in \cref{tab:main-results-math}), LCO‑KLD achieves superior results. This advantage stems from its direct derivation from the core reinforcement learning objective of maximizing expected advantage, providing a solid theoretical foundation that leads to stronger empirical performance.

% The superior performance of LCO-KLD can be attributed to its derivation from the original reinforcement learning objective of maximizing the expected advantage, providing a solid theoretical foundation that translates into strong empirical results.

\vspace{-3.5mm}
\paragraph{LCO with Limited Advantage Feedback} 
To evaluate the robustness of the LCO framework under sparse feedback, where the advantage signal is restricted to sampled actions, we analyze its performance, presented in \cref{tab:results-math-sparse-rm}. Specifically, when combined with DPO-RM, all LCO variants surpass PPO across two model backbones. For example, on the Qwen-3-1.7B model, LCO-LCH achieves a Pass@1 score of 64.70\% on MATH500, significantly higher than PPO's 55.20\%. Similarly, when integrated with a rule-based RM, where a math verifier assigns a score of $+1$ for correct answers and $-1$ for incorrect ones, LCO demonstrates comparable superiority. In particular, LCO-KLD outperforms GRPO in final accuracy, achieving a MinervaMath score of 10.87\% on Qwen-2.5-1.5B, compared to GRPO's 8.65\%. These results indicate that the LCO remains effective even with sparse advantage signals, exhibiting a more stable optimization than standard policy gradient methods.

\begin{table}[t]
    \centering
    \renewcommand{\arraystretch}{1.0}
    \vspace{-2mm}
    \caption{Comparison with on-policy distillation methods.}
    \vspace{-2.5mm}
    \label{tab:main-results-math-distill}
    \begin{adjustbox}{width=0.99\linewidth}
        \begin{tabular}{l|ccc|ccc}
        \toprule
        \multirow{2}*{\textbf{Methods}} & \multicolumn{3}{c|}{\textbf{Qwen-2.5-3B}} & \multicolumn{3}{c}{\textbf{Qwen-3-4B}} \\
         & \textbf{MATH500} & \textbf{AMC23} & \textbf{MinervaMath} & \textbf{MATH500} & \textbf{AMC23} & \textbf{MinervaMath} \\
         % & Pass@1 & Pass@1 & Pass@1 & Pass@1 & Pass@1 & Pass@1 \\
        \hline
        \hline
        MiniLLM & 55.20$\pm$0.84 & \underline{40.85$\pm$1.24} & 13.60$\pm$0.45 & 66.30$\pm$1.93 & \underline{53.80$\pm$1.23} & 20.34$\pm$0.56 \\
        GKD & \underline{56.40$\pm$1.12} & 34.95$\pm$1.45 & \underline{15.91$\pm$0.89} & \underline{67.70$\pm$1.07} & 50.35$\pm$1.08 & \underline{21.69$\pm$0.54} \\
        \hline
        \rowcolor{mylightblue} LCO-KLD & \textbf{60.80$\pm$0.51} & \textbf{42.65$\pm$1.63} & \textbf{16.71$\pm$0.97} & \textbf{73.20$\pm$1.34} & \textbf{55.50$\pm$1.88} & \textbf{23.95$\pm$0.31} \\
        \bottomrule
        \end{tabular}
    \end{adjustbox}
    \vspace{-2mm}
\end{table}

\begin{table}[t]
    \centering
    \renewcommand{\arraystretch}{1.0}
    \vspace{-1mm}
    \caption{Results with DPO‑RM and rule‑based RM, where the LCO advantage signal is limited to sampled action in both settings.}
    \vspace{-2.5mm}
    \label{tab:results-math-sparse-rm}
    \begin{adjustbox}{width=0.99\linewidth}
        \begin{tabular}{l|ccc|ccc}
        \toprule
        \multirow{2}*{\textbf{Methods}} & \multicolumn{3}{c|}{\textbf{Qwen-2.5-1.5B}} & \multicolumn{3}{c}{\textbf{Qwen-3-1.7B}} \\
         & \textbf{MATH500} & \textbf{AMC23} & \textbf{MinervaMath} & \textbf{MATH500} & \textbf{AMC23} & \textbf{MinervaMath} \\
         % & Pass@1 & Pass@1 & Pass@1 & Pass@1 & Pass@1 & Pass@1 \\
        \hline
        \hline
        \multicolumn{7}{c}{w/ \textit{DPO-RM}} \\
        \hline
        PPO & 41.30$\pm$1.32 & 24.70$\pm$2.13 & 9.23$\pm$0.71 & 55.20$\pm$1.32 & 32.20$\pm$2.31 & 15.31$\pm$1.97 \\
        \hline
        \rowcolor{mylightblue} LCO-MSE & 47.35$\pm$1.64 & 26.80$\pm$1.75 & 9.76$\pm$1.44 & 61.30$\pm$1.23 & 38.10$\pm$1.10 & 16.13$\pm$1.38 \\
        \rowcolor{mylightblue} LCO-LCH & 46.80$\pm$1.41 & 26.50$\pm$2.01 & 8.32$\pm$1.32 & 64.70$\pm$1.93 & 37.50$\pm$1.03 & 15.43$\pm$0.98 \\
        \rowcolor{mylightblue} LCO-KLD & 48.80$\pm$0.94 & 27.50$\pm$2.18 & 10.32$\pm$1.22 & 63.25$\pm$2.01 & 39.50$\pm$1.83 & 16.98$\pm$1.34 \\
        \hline
        \multicolumn{7}{c}{w/ \textit{Rule-based RM}} \\
        \hline
        GRPO & 44.70$\pm$1.39 & 22.70$\pm$2.31 & 8.65$\pm$0.83 & 56.20$\pm$1.31 & 34.50$\pm$2.27 & 14.37$\pm$1.02 \\
        \hline
        \rowcolor{mylightblue} LCO-MSE & 45.85$\pm$1.13 & 24.80$\pm$3.01 & 8.78$\pm$0.57 & 59.40$\pm$1.84 & 35.65$\pm$1.31 & 15.98$\pm$1.95 \\
        \rowcolor{mylightblue} LCO-LCH & 46.20$\pm$1.54 & 25.80$\pm$2.83 & 8.72$\pm$1.02 & 58.70$\pm$1.42 & 34.90$\pm$2.31 & 15.43$\pm$0.98 \\
        \rowcolor{mylightblue} LCO-KLD & 46.20$\pm$2.51 & 25.80$\pm$2.31 & 10.87$\pm$1.36 & 61.40$\pm$1.31 & 37.10$\pm$2.13 & 16.54$\pm$1.34 \\
        \bottomrule
        \end{tabular}
    \end{adjustbox}
    \vspace{-7mm}
\end{table}

\vspace{-2.5mm}
\section{Conclusion}
\vspace{-2mm}
In this work, we analyzed the sources of RL instability in LLMs and identified \emph{logits convexity} as a property for stable gradient behavior. We demonstrated the widely used clipped surrogate objective lacks this property, leading to gradient fluctuations and training collapse. Leveraging this insight, we proposed Logits Convex Optimization (LCO) framework, preserving logits convexity and promotes stable training.
% , mitigates sudden gradient spikes, and can be seamlessly integrated into existing RL algorithms. 
Empirical results show that LCO delivers consistently stable gradient dynamics and improved performance across various tasks and model families. Our findings provide both a theoretical explanation for RL instability and a practical framework for more reliable LLMs optimization.

\section*{Use of Generative AI Tools}

In this work, we utilize large language models solely for the purpose of polishing the manuscript. Specifically, they are employed to improve clarity and precision of phrasing, ensure grammatical correctness and spelling accuracy, and provide suggestions to enhance overall coherence and readability. The core research problem, conceptual framework, methodologies, analysis, and results are entirely developed by the authors. Our use of LLMs is strictly confined to improving efficiency and quality of academic writing without influencing the intellectual contributions of this work.

\section*{Impact Statement}

This paper focuses on the theoretical aspects of reinforcement learning in the context of large language models, introducing a framework for stabilizing policy optimization. As a result, we anticipate no immediate ethical concerns or negative societal implications arising from this research.

% Authors are \textbf{required} to include a statement of the potential broader
% impact of their work, including its ethical aspects and future societal
% consequences. This statement should be in an unnumbered section at the end of
% the paper (co-located with Acknowledgements -- the two may appear in either
% order, but both must be before References), and does not count toward the paper
% page limit. In many cases, where the ethical impacts and expected societal
% implications are those that are well established when advancing the field of
% Machine Learning, substantial discussion is not required, and a simple
% statement such as the following will suffice:

% ``This paper presents work whose goal is to advance the field of Machine
% Learning. There are many potential societal consequences of our work, none
% which we feel must be specifically highlighted here.''

% The above statement can be used verbatim in such cases, but we encourage
% authors to think about whether there is content which does warrant further
% discussion, as this statement will be apparent if the paper is later flagged
% for ethics review.

% In the unusual situation where you want a paper to appear in the
% references without citing it in the main text, use \nocite
\nocite{langley00}

\bibliography{example_paper}

@inproceedings{langley00,
 author    = {P. Langley},
 title     = {Crafting Papers on Machine Learning},
 year      = {2000},
 pages     = {1207--1216},
 editor    = {Pat Langley},
 booktitle     = {Proceedings of the 17th International Conference
              on Machine Learning (ICML 2000)},
 address   = {Stanford, CA},
 publisher = {Morgan Kaufmann}
}

@article{schulman2015high,
  title={High-dimensional continuous control using generalized advantage estimation},
  author={Schulman, John and Moritz, Philipp and Levine, Sergey and Jordan, Michael and Abbeel, Pieter},
  journal={arXiv preprint arXiv:1506.02438},
  year={2015}
}

@article{shao2024deepseekmath,
  title={Deepseekmath: Pushing the limits of mathematical reasoning in open language models},
  author={Shao, Zhihong and Wang, Peiyi and Zhu, Qihao and Xu, Runxin and Song, Junxiao and Bi, Xiao and Zhang, Haowei and Zhang, Mingchuan and Li, YK and Wu, Y and others},
  journal={arXiv preprint arXiv:2402.03300},
  year={2024}
}

@article{chen2025minimax,
  title={MiniMax-M1: Scaling Test-Time Compute Efficiently with Lightning Attention},
  author={Chen, Aili and Li, Aonian and Gong, Bangwei and Jiang, Binyang and Fei, Bo and Yang, Bo and Shan, Boji and Yu, Changqing and Wang, Chao and Zhu, Cheng and others},
  journal={arXiv preprint arXiv:2506.13585},
  year={2025}
}

@article{williams1992simple,
  title={Simple statistical gradient-following algorithms for connectionist reinforcement learning},
  author={Williams, Ronald J},
  journal={Machine learning},
  volume={8},
  pages={229--256},
  year={1992},
  publisher={Springer}
}

@online{maa2023amc,
  author       = {MAA},
  title        = {American Mathematics Competitions},
  year         = {2023},
  url          = {https://www.maa.org/math-competitions},
  urldate      = {2024-10-01},
  organization = {Mathematical Association of America},
  note         = {Online competition series.}
}

@inproceedings{ahmadian2024back,
    title = "Back to Basics: Revisiting {REINFORCE}-Style Optimization for Learning from Human Feedback in {LLM}s",
    author = {Ahmadian, Arash  and
      Cremer, Chris  and
      Gall{\'e}, Matthias  and
      Fadaee, Marzieh  and
      Kreutzer, Julia  and
      Pietquin, Olivier  and
      {\"U}st{\"u}n, Ahmet  and
      Hooker, Sara},
    editor = "Ku, Lun-Wei  and
      Martins, Andre  and
      Srikumar, Vivek",
    booktitle = "Proceedings of the 62nd Annual Meeting of the Association for Computational Linguistics (Volume 1: Long Papers)",
    month = aug,
    year = "2024",
    address = "Bangkok, Thailand",
    publisher = "Association for Computational Linguistics",
    pages = "12248--12267"
}

@article{schulman2017proximal,
  title={Proximal policy optimization algorithms},
  author={Schulman, John and Wolski, Filip and Dhariwal, Prafulla and Radford, Alec and Klimov, Oleg},
  journal={arXiv preprint arXiv:1707.06347},
  year={2017}
}

@inproceedings{schulman2015trust,
  title={Trust region policy optimization},
  author={Schulman, John and Levine, Sergey and Abbeel, Pieter and Jordan, Michael and Moritz, Philipp},
  booktitle={International conference on machine learning},
  pages={1889--1897},
  year={2015},
  organization={PMLR}
}

@article{ouyang2022training,
  title={Training language models to follow instructions with human feedback},
  author={Ouyang, Long and Wu, Jeffrey and Jiang, Xu and Almeida, Diogo and Wainwright, Carroll and Mishkin, Pamela and Zhang, Chong and Agarwal, Sandhini and Slama, Katarina and Ray, Alex and others},
  journal={Advances in Neural Information Processing Systems},
  volume={35},
  pages={27730--27744},
  year={2022}
}

@article{yu2025dapo,
  title={Dapo: An open-source llm reinforcement learning system at scale},
  author={Yu, Qiying and Zhang, Zheng and Zhu, Ruofei and Yuan, Yufeng and Zuo, Xiaochen and Yue, Yu and Dai, Weinan and Fan, Tiantian and Liu, Gaohong and Liu, Lingjun and others},
  journal={arXiv preprint arXiv:2503.14476},
  year={2025}
}

@article{hu2025reinforce++,
  title={Reinforce++: An efficient rlhf algorithm with robustness to both prompt and reward models},
  author={Hu, Jian and Liu, Jason Klein and Xu, Haotian and Shen, Wei},
  journal={arXiv preprint arXiv:2501.03262},
  year={2025}
}

@InProceedings{cui2023ultrafeedback,
  title = 	 {{ULTRAFEEDBACK}: Boosting Language Models with Scaled {AI} Feedback},
  author =       {Cui, Ganqu and Yuan, Lifan and Ding, Ning and Yao, Guanming and He, Bingxiang and Zhu, Wei and Ni, Yuan and Xie, Guotong and Xie, Ruobing and Lin, Yankai and Liu, Zhiyuan and Sun, Maosong},
  booktitle = 	 {Proceedings of the 41st International Conference on Machine Learning},
  pages = 	 {9722--9744},
  year = 	 {2024},
  editor = 	 {Salakhutdinov, Ruslan and Kolter, Zico and Heller, Katherine and Weller, Adrian and Oliver, Nuria and Scarlett, Jonathan and Berkenkamp, Felix},
  volume = 	 {235},
  series = 	 {Proceedings of Machine Learning Research},
  month = 	 {21--27 Jul},
  publisher =    {PMLR},
  url = 	 {https://proceedings.mlr.press/v235/cui24f.html}
}

@article{guo2025deepseek,
  title={Deepseek-r1: Incentivizing reasoning capability in llms via reinforcement learning},
  author={Guo, Daya and Yang, Dejian and Zhang, Haowei and Song, Junxiao and Zhang, Ruoyu and Xu, Runxin and Zhu, Qihao and Ma, Shirong and Wang, Peiyi and Bi, Xiao and others},
  journal={arXiv preprint arXiv:2501.12948},
  year={2025}
}

@inproceedings{lightman2023let,
  title={Let's verify step by step},
  author={Lightman, Hunter and Kosaraju, Vineet and Burda, Yuri and Edwards, Harrison and Baker, Bowen and Lee, Teddy and Leike, Jan and Schulman, John and Sutskever, Ilya and Cobbe, Karl},
  booktitle={The Twelfth International Conference on Learning Representations},
  year={2023}
}

@article{lewkowycz2022solving,
  title={Solving quantitative reasoning problems with language models},
  author={Lewkowycz, Aitor and Andreassen, Anders and Dohan, David and Dyer, Ethan and Michalewski, Henryk and Ramasesh, Vinay and Slone, Ambrose and Anil, Cem and Schlag, Imanol and Gutman-Solo, Theo and others},
  journal={Advances in Neural Information Processing Systems},
  volume={35},
  pages={3843--3857},
  year={2022}
}

@article{zheng2025group,
  title={Group Sequence Policy Optimization},
  author={Zheng, Chujie and Liu, Shixuan and Li, Mingze and Chen, Xiong-Hui and Yu, Bowen and Gao, Chang and Dang, Kai and Liu, Yuqiong and Men, Rui and Yang, An and others},
  journal={arXiv preprint arXiv:2507.18071},
  year={2025}
}

@article{cui2025entropy,
  title={The entropy mechanism of reinforcement learning for reasoning language models},
  author={Cui, Ganqu and Zhang, Yuchen and Chen, Jiacheng and Yuan, Lifan and Wang, Zhi and Zuo, Yuxin and Li, Haozhan and Fan, Yuchen and Chen, Huayu and Chen, Weize and others},
  journal={arXiv preprint arXiv:2505.22617},
  year={2025}
}

@article{yang2025qwen3,
  title={Qwen3 technical report},
  author={Yang, An and Li, Anfeng and Yang, Baosong and Zhang, Beichen and Hui, Binyuan and Zheng, Bo and Yu, Bowen and Gao, Chang and Huang, Chengen and Lv, Chenxu and others},
  journal={arXiv preprint arXiv:2505.09388},
  year={2025}
}

@article{bai2022training,
  title={Training a Helpful and Harmless Assistant with Reinforcement Learning from Human Feedback}, 
  author={Yuntao Bai and Andy Jones and Kamal Ndousse and Amanda Askell and Anna Chen and Nova DasSarma and Dawn Drain and Stanislav Fort and Deep Ganguli and Tom Henighan and Nicholas Joseph and Saurav Kadavath and Jackson Kernion and Tom Conerly and Sheer El-Showk and Nelson Elhage and Zac Hatfield-Dodds and Danny Hernandez and Tristan Hume and Scott Johnston and Shauna Kravec and Liane Lovitt and Neel Nanda and Catherine Olsson and Dario Amodei and Tom Brown and Jack Clark and Sam McCandlish and Chris Olah and Ben Mann and Jared Kaplan},
  journal={arXiv preprint arXiv:2407.16574},
  year={2024}
}

@article{rafailov2024direct,
  title={Direct preference optimization: Your language model is secretly a reward model},
  author={Rafailov, Rafael and Sharma, Archit and Mitchell, Eric and Manning, Christopher D and Ermon, Stefano and Finn, Chelsea},
  journal={Advances in Neural Information Processing Systems},
  volume={36},
  year={2024}
}

@article{team2025ring,
  title={Ring-lite: Scalable Reasoning via C3PO-Stabilized Reinforcement Learning for LLMs},
  author={Team, Ling and Hu, Bin and Chen, Cai and Zhao, Deng and Liu, Ding and Jin, Dingnan and Zhu, Feng and Dai, Hao and Luan, Hongzhi and Guo, Jia and others},
  journal={arXiv preprint arXiv:2506.14731},
  year={2025}
}

@article{zhu2025carft,
  title={CARFT: Boosting LLM Reasoning via Contrastive Learning with Annotated Chain-of-Thought-based Reinforced Fine-Tuning},
  author={Zhu, Wenqiao and Liu, Ji and Zhang, Rongjuncheng and Wu, Haipang and Zhang, Yulun},
  journal={arXiv preprint arXiv:2508.15868},
  year={2025}
}

@article{yuan2025what,
  title={What's Behind PPO's Collapse in Long-CoT? Value Optimization Holds the Secret},
  author={Yuan, Yufeng and Yue, Yu and Zhu, Ruofei and Fan, Tiantian and Yan, Lin},
  journal={arXiv preprint arXiv:2503.01491},
  year={2025}
}

@article{yang2025dcpo,
  title={DCPO: Dynamic Clipping Policy Optimization},
  author={Yang, Shihui and Dou, Chengfeng and Guo, Peidong and Lu, Kai and Ju, Qiang and Deng, Fei and Xin, Rihui},
  journal={arXiv preprint arXiv:2509.02333},
  year={2025}
}

@article{zhang2025r1,
  title={R1-reward: Training multimodal reward model through stable reinforcement learning},
  author={Zhang, Yi-Fan and Lu, Xingyu and Hu, Xiao and Fu, Chaoyou and Wen, Bin and Zhang, Tianke and Liu, Changyi and Jiang, Kaiyu and Chen, Kaibing and Tang, Kaiyu and others},
  journal={arXiv preprint arXiv:2505.02835},
  year={2025}
}

@inproceedings{
rafailov2024from,
title={From \$r\$ to \$Q{\textasciicircum}*\$: Your Language Model is Secretly a Q-Function},
author={Rafael Rafailov and Joey Hejna and Ryan Park and Chelsea Finn},
booktitle={First Conference on Language Modeling},
year={2024},
url={https://openreview.net/forum?id=kEVcNxtqXk}
}

@article{lee2019wide,
  title={Wide neural networks of any depth evolve as linear models under gradient descent},
  author={Lee, Jaehoon and Xiao, Lechao and Schoenholz, Samuel and Bahri, Yasaman and Novak, Roman and Sohl-Dickstein, Jascha and Pennington, Jeffrey},
  journal={Advances in neural information processing systems},
  volume={32},
  year={2019}
}

@article{wu2023fine,
  title={Fine-grained human feedback gives better rewards for language model training},
  author={Wu, Zeqiu and Hu, Yushi and Shi, Weijia and Dziri, Nouha and Suhr, Alane and Ammanabrolu, Prithviraj and Smith, Noah A and Ostendorf, Mari and Hajishirzi, Hannaneh},
  journal={Advances in Neural Information Processing Systems},
  volume={36},
  pages={59008--59033},
  year={2023}
}

@article{dubois2023alpacafarm,
  title={Alpacafarm: A simulation framework for methods that learn from human feedback},
  author={Dubois, Yann and Li, Chen Xuechen and Taori, Rohan and Zhang, Tianyi and Gulrajani, Ishaan and Ba, Jimmy and Guestrin, Carlos and Liang, Percy S and Hashimoto, Tatsunori B},
  journal={Advances in Neural Information Processing Systems},
  volume={36},
  year={2024}
}

@article{gu2023minillm,
  title={Minillm: Knowledge distillation of large language models},
  author={Gu, Yuxian and Dong, Li and Wei, Furu and Huang, Minlie},
  journal={arXiv preprint arXiv:2306.08543},
  year={2023}
}

@inproceedings{agarwal2024policy,
  title={On-policy distillation of language models: Learning from self-generated mistakes},
  author={Agarwal, Rishabh and Vieillard, Nino and Zhou, Yongchao and Stanczyk, Piotr and Garea, Sabela Ramos and Geist, Matthieu and Bachem, Olivier},
  booktitle={The twelfth international conference on learning representations},
  year={2024}
}

@article{zhong2024dpo,
  title={Dpo meets ppo: Reinforced token optimization for rlhf},
  author={Zhong, Han and Shan, Zikang and Feng, Guhao and Xiong, Wei and Cheng, Xinle and Zhao, Li and He, Di and Bian, Jiang and Wang, Liwei},
  journal={arXiv preprint arXiv:2404.18922},
  year={2024}
}

@article{li2025generalist,
  title={Generalist Reward Models: Found Inside Large Language Models},
  author={Li, Yi-Chen and Xu, Tian and Yu, Yang and Zhang, Xuqin and Chen, Xiong-Hui and Ling, Zhongxiang and Chao, Ningjing and Yuan, Lei and Zhou, Zhi-Hua},
  journal={arXiv preprint arXiv:2506.23235},
  year={2025}
}

@book{ziebart2010modeling,
  title={Modeling purposeful adaptive behavior with the principle of maximum causal entropy},
  author={Ziebart, Brian D},
  year={2010},
  publisher={Carnegie Mellon University}
}

@article{chen2025discriminative,
  title={Discriminative Policy Optimization for Token-Level Reward Models},
  author={Chen, Hongzhan and Yang, Tao and Gao, Shiping and Chen, Ruijun and Quan, Xiaojun and Tian, Hongtao and Yao, Ting},
  journal={arXiv preprint arXiv:2505.23363},
  year={2025}
}

@article{jacot2018neural,
  title={Neural tangent kernel: Convergence and generalization in neural networks},
  author={Jacot, Arthur and Gabriel, Franck and Hongler, Cl{\'e}ment},
  journal={Advances in neural information processing systems},
  volume={31},
  year={2018}
}
\bibliographystyle{icml2026}

%%%%%%%%%%%%%%%%%%%%%%%%%%%%%%%%%%%%%%%%%%%%%%%%%%%%%%%%%%%%%%%%%%%%%%%%%%%%%%%
%%%%%%%%%%%%%%%%%%%%%%%%%%%%%%%%%%%%%%%%%%%%%%%%%%%%%%%%%%%%%%%%%%%%%%%%%%%%%%%
% APPENDIX
%%%%%%%%%%%%%%%%%%%%%%%%%%%%%%%%%%%%%%%%%%%%%%%%%%%%%%%%%%%%%%%%%%%%%%%%%%%%%%%
%%%%%%%%%%%%%%%%%%%%%%%%%%%%%%%%%%%%%%%%%%%%%%%%%%%%%%%%%%%%%%%%%%%%%%%%%%%%%%%
\newpage
\appendix
\onecolumn

\section{Related Work}

Recent research in reinforcement learning has increasingly focused on improving the stability of policy training. These efforts can be broadly categorized into three groups.

The \textbf{first category} aims to reduce the variance or bias in advantage estimation. A seminal work in this line is the GAE \citep{schulman2015high}, which combines Monte Carlo returns and a value model to balance bias and variance. 
Extending GAE, VC-PPO \citep{yuan2025what} identifies a failure mode where the value model exhibits bias during training, resulting in large errors in advantage estimation. To address this, they propose a pretraining procedure for the value model, and decouple the $\lambda$ in GAE for the policy and value model computations.
\citet{zhang2025r1} identify outliers caused by the imbalance in the advantage distribution. They propose StableReinforce, which applies an advantage filter to retain only those advantages that fall within three standard deviations for stable training.
By simplifying the advantage estimation process, RLOO \citep{ahmadian2024back} employs a leave-one-out baseline across multiple completions to produce advantage estimate for prompt. Similarly, \citet{shao2024deepseekmath} introduce GRPO, which standardizes sequence-level rewards by subtracting the mean and dividing by the standard deviation, thereby reducing bias and variance. Extending GRPO, \citet{yu2025dapo} propose DAPO, which re-weights token-level losses to prevent longer responses from being underrepresented in gradient updates.

The \textbf{second category} stabilizes training by constraining policy updates through a Kullback-Leibler (KL) divergence penalty relative to a reference model. 
For example, TRPO \citep{schulman2015trust} aims to find a policy that increases the probability of advantageous actions while limiting the divergence from the previous policy using a KL constraint, ensuring stable training. Building upon PPO, \citet{ouyang2022training,hu2025reinforce++} add a token-level KL penalty to the reward function, which constrains the policy at each generation step to remain close to the reference SFT model.
GRPO \citep{shao2024deepseekmath} modifies this approach by applying the KL constraint directly to the policy loss rather than the reward, which allows for more targeted optimization. KL-Cov \citep{cui2025entropy} advances this idea by analyzing policy entropy, showing that entropy change is driven by the covariance between action probabilities and advantages, and applying KL penalties selectively to high-covariance tokens to prevent entropy collapse and improve stability.

The \textbf{third category} employs clipping mechanisms to stabilize policy updates.
PPO and GRPO constrain the importance sampling ratio between current and previous policies within fixed upper and lower bounds to prevent excessively large policy updates. However, such bounds can limit training efficiency and unduly constrain specific updates. To address this, DAPO \citep{yu2025dapo} proposes a decoupled clip-higher method that relaxes the upper clipping bound to improve training efficiency while maintaining stability. 
Building upon the same idea, DCPO \citep{yang2025dcpo} addresses the limitation in DAPO, where the same clip range is set for different positions. It further introduces a dynamic clipping method that adaptively adjusts the clipping bounds based on the token probabilities, thereby mitigating the drawbacks of fixed clipping bounds.
\citet{chen2025minimax} identify a key limitation in PPO/GRPO: clipping can prematurely drop high-advantage tokens from contributing to off-policy gradients. They introduce CISPO, which clips importance sampling weights without clipping token updates to stabilize training. Extending this covariance analysis, \citet{cui2025entropy} propose Clip-Cov, which applies clipping selectively to updates on high-covariance tokens to further enhance training stability.

Unlike previous work, our study is inspired by the stable training of SFT and provides a theoretical analysis of RL instability from a gradient perspective. We identify \textit{logits convexity}, which induces smoother gradient updates during optimization and ensures more stable RL training, and propose a simple yet effective policy optimization framework.

\begin{figure*}[b]
    \centering
    \vspace{-1mm}
    \includegraphics[width=0.9\linewidth]{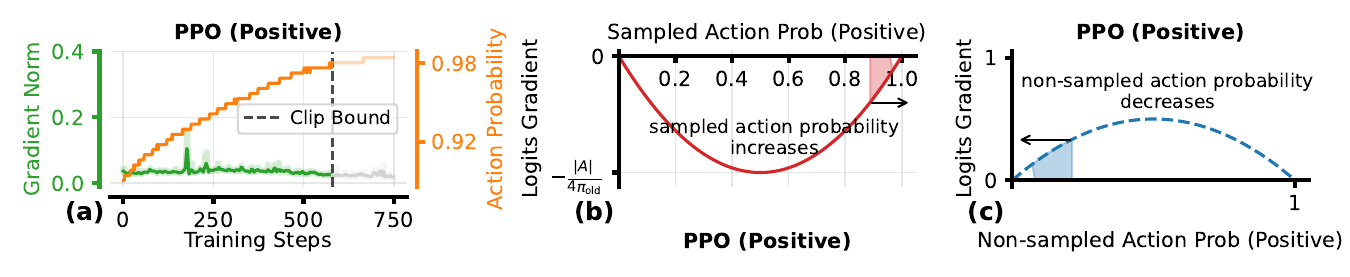}
    \vspace{-4mm}
    \caption{Training dynamics of PPO for actions with positive advantages. \textbf{(a)} Gradient norm decreases as training progresses while sampled action probabilities increase. The magnitudes of both sampled \textbf{(b)} and non-sampled action logit gradient \textbf{(c)} decrease.}
    \label{fig:introduction-ratio-positive}
    \vspace{-4mm}
\end{figure*}

\section{Additional Experiment}

\paragraph{Gradient Dynamics of PPO} \cref{fig:introduction-ratio-positive} shows the gradient dynamics of PPO for actions with positive advantage. We consider $A(s_t,a_t)>0$, the gradient norm of the surrogate objective decreases as training progress, as shown in \cref{fig:introduction-ratio-positive}(a), while the magnitude of logit gradients decrease before being clipped (\cref{fig:introduction-ratio-positive}(b) and (c)).

\paragraph{Gradient Dynamics of LCO-KLD} \cref{fig:introduction-convex-positive} shows the gradient dynamics of LCO-KLD for actions with positive advantage. We consider $A(s_t,a_t)$ for the sampled action, the gradient norm of the LCO-KLD objective decreases as training progress, as shown in \cref{fig:introduction-convex-positive}(a), while the magnitude of logit gradients decrease correspondingly (\cref{fig:introduction-convex-positive}(b) and (c)).

\begin{figure*}[t]
    \centering
    \vspace{-0mm}
    \includegraphics[width=0.9\linewidth]{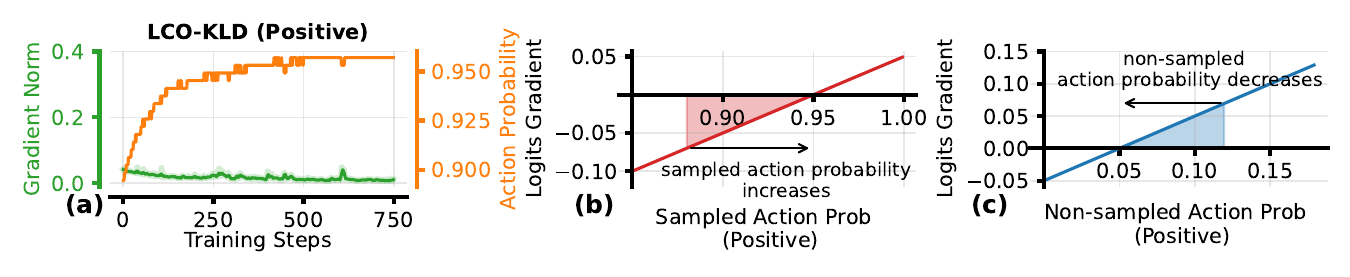}
    \vspace{-4mm}
    \caption{Training dynamics of LCO-KLD for actions with positive advantages. We show the dynamics of gradient norm \textbf{(a)}, logit gradient for sampled \textbf{(b)} and non-sampled \textbf{(c)} actions. LCO‑KLD exhibits stable gradient updates as the model converges.}
    \label{fig:introduction-convex-positive}
    \vspace{-5mm}
\end{figure*}

\begin{table}[h]
    \centering
    \renewcommand{\arraystretch}{1.0}
    \vspace{-0mm}
    \caption{Results on out-of-distribution tasks. Best performances are shown in \textbf{bold}, while suboptimal ones are \underline{underlined}.}
    \vspace{-2.5mm}
    \label{tab:ood-results}
    \begin{adjustbox}{width=0.45\linewidth}
        \begin{tabular}{l|cc|cc}
            \toprule
             \multirow{2}*{\textbf{Methods}} & \multicolumn{2}{c|}{\textbf{Qwen-3-4B}} & \multicolumn{2}{c}{\textbf{Qwen-2.5-3B}} \\
             & \textbf{MMLU} & \textbf{MMLU-Redux} & \textbf{MMLU} & \textbf{MMLU-Redux} \\
             \hline 
             \hline
             % REINFORCE & 71.11$\pm$0.93 & 75.07$\pm$1.18 & 57.69$\pm$2.23 & 55.18$\pm$2.85 \\
             PPO & 67.87$\pm$1.23 & 73.93$\pm$1.53 & 58.49$\pm$1.89 & 55.59$\pm$2.01 \\
             GRPO & 64.69$\pm$1.42 & 71.35$\pm$1.51 & 58.61$\pm$1.42 & 56.97$\pm$1.94 \\
             DAPO & 65.39$\pm$1.21 & 70.37$\pm$1.82 & \underline{59.94$\pm$1.11} & 52.49$\pm$1.34 \\
             GSPO & 68.71$\pm$2.01 & 72.72$\pm$1.71 & 54.95$\pm$1.38 & 56.44$\pm$1.74 \\
             \hline
             \rowcolor{mylightblue} LCO-MSE & 68.71$\pm$1.42 & \underline{74.21$\pm$1.34} & 58.10$\pm$1.67 & 56.73$\pm$2.01 \\
             \rowcolor{mylightblue} LCO-LCH & \underline{69.12$\pm$1.31} & 74.12$\pm$1.23 & 59.23$\pm$2.13 & \textbf{58.71$\pm$1.83} \\
             \rowcolor{mylightblue} LCO-KLD & \textbf{72.11$\pm$1.12} & \textbf{76.71$\pm$1.35} & \textbf{60.14$\pm$1.35} & \underline{57.58$\pm$1.67} \\
             \bottomrule
        \end{tabular}
    \end{adjustbox}
    \vspace{-4mm}
\end{table}

\paragraph{Out-of-Distribution Performance} In order to assess the out‑of‑distribution (OOD) generalization beyond the mathematical reasoning results in \cref{tab:ood-results}, we evaluate the models on multi‑task language understanding benchmarks, including MMLU and MMLU‑Redux. As shown in \cref{tab:ood-results}, the LCO objectives consistently outperform baseline methods in accuracy. With the Qwen‑3‑4B backbone, LCO‑KLD obtains the highest accuracy on both MMLU (72.11\%) and MMLU‑Redux (76.71\%), surpassing PPO (67.87\% and 73.93\%, respectively). Using the Qwen‑2.5‑3B backbone, LCO‑KLD delivers the best accuracy on MMLU (60.14\%), while LCO‑LCH achieves the top accuracy on MMLU‑Redux (58.71\%). These results highlight the strong OOD generalization and robustness of the LCO objectives across different model families.

\section{Logit Gradient}

\subsection{Logit Gradient of SFT}
\label{sec:appendix-sft-logit-gradient-derivation}
We provide the derivation for \cref{eq:sft-logit-gradient}. At a time step $t$, let $a_t$ denote the target (ground-truth) token and $a'_t$ be an arbitrary token in the vocabulary $\mathcal{V}$, the derivative of $\mathcal{L}_\text{SFT}$ defined in \cref{eq:sft-objective} with respect to the logit $z_\theta(s_t,a'_t)$ is:
\begin{equation}
    \begin{aligned}
        \frac{\partial \mathcal{L}_\text{SFT}}{\partial z_\theta(s_t,a'_t)}&=-\frac{\partial \log\pi_\theta(a_t|s_t)}{\partial\pi_\theta(a_t|s_t)}\frac{\partial\pi_\theta(a_t|s_t)}{z_\theta(s_t,a'_t)} \\
        &=-\frac{1}{\pi_\theta(a_t|s_t)}\pi_\theta(a_t|s_t)(\mathbb{I}_{a_t=a'_t}-\pi_\theta(a'_t|s_t)) \\
        &=\pi_\theta(a'_t|s_t)-\mathbb{I}_{a_t=a'_t},
    \end{aligned}
\end{equation}
where $\mathbb{I}_{a_t=a'_t}$ is the indicator function, which equals $1$ when $a_t=a'_t$, and $0$ otherwise.

\subsection{Logit Gradient of PPO}
\label{sec:appendix-ppo-logit-gradient-derivation}
We provide the derivation for \cref{eq:ppo-logit-gradient}. At a time step $t$, let $a_t$ be the sampled action and $a'_t$ be an arbitrary action in the $\mathcal{V}$, the derivative of $\mathcal{L}_\text{PPO}$ defined in \cref{eq:ppo-objective} with respect to the logit $z_\theta(s_t,a'_t)$ is (when the gradient is not clipped):
\begin{equation}
    \begin{aligned}
        \frac{\partial\mathcal{L}_\text{PPO}}{\partial z_\theta(s_t,a'_t)}&=-\frac{A(s_t,a_t)}{\pi_\text{old}(a_t|s_t)}\frac{\partial \pi_\theta(a_t|s_t)}{\partial z_\theta(s_t,a'_t)} \\
        &=\frac{A(s_t,a_t)}{\pi_\text{old}(a_t|s_t)}\pi_\theta(a_t|s_t)(\pi_\theta(a'_t|s_t)-\mathbb{I}_{a_t=a'_t}).
    \end{aligned}
\end{equation}

\section{Derivation of Optimal Policy}
\label{sec:proof-of-trpo-optimal-policy}
In this section, we provide the derivation for \cref{eq:trpo-optimal-policy}. Under the reinforcement learning objective, we aim to maximize the expected advantage with a KL constraint at each time step $t$:
\begin{equation}
    \max_{\pi_\theta} \mathbb{E}_{a_t\sim\pi_\theta(\cdot|s_t)}\left[A(s_t,a_t)\right]-\beta\mathbb{D}_\text{KL}(\pi_\theta(\cdot|s_t)\|\pi_\text{old}(\cdot|s_t)) ~~~~ s.t. ~ \sum_{a_t\in\mathcal{V}}\pi_\theta(a_t|s_t)=1.
\end{equation}
To find the optimal policy $\pi^*$, we aim to maximize above objective under the constraint $\sum_{a_t\in\mathcal{V}}\pi_\theta(a_t|s_t)=1$. To solve the constrained optimization problem, we can use the method of Lagrange multiplier. Let $\lambda$ be the Lagrange multiplier, the objective can be written as:
\begin{equation}
    \begin{aligned}
        \mathcal{J}&=\mathbb{E}_{a_t\sim\pi_\theta(\cdot|s_t)}\left[A(s_t,a_t)\right]-\beta\mathbb{D}_\text{KL}(\pi_\theta(\cdot|s_t)\|\pi_\text{old}(\cdot|s_t))+\lambda\left[\sum_{a_t\in\mathcal{V}}\pi_\theta(a_t|s_t)-1\right] \\
        &=\sum_{a_t\in\mathcal{V}}\pi_\theta(a_t|s_t)A(s_t,a_t)-\beta\left[\sum_{a_t\in\mathcal{V}}\pi_\theta(a_t|s_t)\log\frac{\pi_\theta(a_t|s_t)}{\pi_\text{old}(a_t|s_t)}\right]+\lambda\left[\sum_{a_t\in\mathcal{V}}\pi_\theta(a_t|s_t)-1\right].
    \end{aligned}
\end{equation}
We set the following partial derivative to zero and have:
\begin{equation}
    \label{eq:optimal-partial-derivative}
    \frac{\partial\mathcal{J}}{\partial \pi_\theta(a_t|s_t)}=A(s_t,a_t)-\beta\left[\log\frac{\pi_\theta(a_t|s_t)}{\pi_\text{old}(a_t|s_t)}+1\right]+\lambda=0.
\end{equation}
% TODO: 最优policy的定义
Using \cref{eq:optimal-partial-derivative}, we have the expression for optimal policy $\pi^*$ as follows:
\begin{equation}
    \pi^*(a_t|s_t)=\pi_\text{old}(a_t|s_t)\exp\left[\frac{A(s_t,a_t)}{\beta}\right]\exp\left[\frac{\lambda-\beta}{\beta}\right].
\end{equation}
Since the term $\exp\left[\frac{\lambda-\beta}{\beta}\right]$ is constant, the optimal policy $\pi^*$ is prop to:
\begin{equation}
    \pi^*(a_t|s_t)\propto\pi_\text{old}(a_t|s_t)\exp\left[\frac{A(s_t,a_t)}{\beta}\right].
\end{equation}
By ensuring the property of probability, the optimal policy can be written as:
\begin{equation}
    \label{eq:appendix-trpo-optimal-policy}
    \pi^*(a_t|s_t)=\frac{\pi_\text{old}(a_t|s_t)\exp\left[\frac{A(s_t,a_t)}{\beta}\right]}{\sum_{a'_t\in\mathcal{V}}\pi_\text{old}(a'_t|s_t)\exp\left[\frac{A(s_t,a'_t)}{\beta}\right]}.
\end{equation}
By substituting the softmax definition of the behavioral policy $\pi_\text{old}(a_t|s_t)$ into \cref{eq:appendix-trpo-optimal-policy}, the normalization terms $\sum_{a'_t\in\mathcal{V}}\exp z_\text{old}(s_t,a'_t)$ in the numerator and denominator cancel out. This allows us to reformulate the optimal policy directly in terms of the behavioral logits $z_\text{old}(s_t,a_t)$:
% By canceling out the normalized term $\sum_{a'_t\in\mathcal{V}}\exp z_\text{old}(s_t,a'_t)$ in $\pi_\text{old}(a_t|s_t)$, we reformulate the optimal policy:
\begin{equation}
    \begin{aligned}
        \pi^*(a_t|s_t)=\frac{\exp\left[z_\text{old}(s_t,a_t)+\frac{A(s_t,a_t)}{\beta}\right]}{\sum_{a'_t\in\mathcal{V}}\exp\left[z_\text{old}(s_t,a'_t)+\frac{A(s_t,a'_t)}{\beta}\right]}.
    \end{aligned}
\end{equation}
By equating the softmax arguments, we can identify a representative solution for the optimal logits $z^*$ as:
\begin{equation}
    z^*(s_t,a_t)=z_\text{old}(s_t,a_t)+\frac{A(s_t,a_t)}{\beta}.
\end{equation}
These optimal logits are obtained via a direct advantage-based adjustment without an additional state-dependent constant shift. For clarity and consistency, we define this specific solution as the unique optimal logits throughout this paper.

\section{Derivation of Gradient Directionality}
\label{sec:proof-of-gradient-directionality}

In this section, we provide the derivation for Proposition \ref{prop:gradient-directionality}. 
Consider logits $\boldsymbol{z}_\theta$ parameterized by $\theta$. Let $\Delta\theta=\theta^*-\theta$ be the parameter displacement, the first-order Taylor expansion around $\theta$ is:
\begin{equation}
    \boldsymbol{z}_{\theta^*}=\boldsymbol{z}_{\theta}+\nabla_\theta \boldsymbol{z}_{\theta}^\top\Delta\theta+o(\|\Delta\theta\|).
\end{equation}
Under Assumption \ref{ass:second-order-assumption}, the higher-order term $o(\|\Delta\theta\|)$ can be truncated without significant loss of accuracy:
\begin{equation}
    \label{eq:logits-first-order-approximation}
    \boldsymbol{z}_{\theta^*}\approx\boldsymbol{z}_{\theta}+\nabla_\theta \boldsymbol{z}_{\theta}^\top\Delta\theta.
\end{equation}
Let $\mathcal{L}$ be a differentiable loss function that take $\boldsymbol{z}_\theta$ as input. By applying the chain rule, we evaluate the inner product between the parameter gradient and the parameter displacement:
\begin{equation}
    \langle\nabla_\theta\mathcal{L},\theta-\theta^*\rangle=\langle\nabla_\theta\boldsymbol{z}_\theta\nabla_{\boldsymbol{z}_\theta}\mathcal{L},\theta-\theta^*\rangle=\langle\nabla_{\boldsymbol{z}_\theta}\mathcal{L},\nabla_\theta\boldsymbol{z}_\theta^\top(\theta-\theta^*)\rangle.
\end{equation}
By rearranging the terms in \cref{eq:logits-first-order-approximation}, and substituting into the inner product expression yields:
\begin{equation}
    \langle\nabla_{\boldsymbol{z}_\theta}\mathcal{L},\nabla_\theta\boldsymbol{z}_\theta^\top(\theta-\theta^*)\rangle\approx\langle\nabla_{\boldsymbol{z}_\theta}\mathcal{L},\boldsymbol{z}_\theta-\boldsymbol{z}_{\theta^*}\rangle.
\end{equation}
If $\mathcal{L}$ is convex with respect to logits, by the first-order characterization of convex function, we have:
\begin{equation}
    \langle\nabla_{\boldsymbol{z}_\theta}\mathcal{L},\nabla_\theta\boldsymbol{z}_\theta^\top(\theta-\theta^*)\rangle\approx\langle\nabla_{\boldsymbol{z}_\theta}\mathcal{L},\boldsymbol{z}_\theta-\boldsymbol{z}_{\theta^*}\rangle\geq0.
\end{equation}

\section{Logits Convexity of Different Objectives}
\label{sec:proof-of-convexity}

\subsection{Logits Convexity of SFT and PPO Losses}
\label{sec:proof-of-sft-convexity}
\paragraph{Logits Convexity of SFT} At time step $t$, the Hessian $\boldsymbol{H}_\text{SFT}$ of $\mathcal{L}_\text{SFT}$ (\cref{eq:sft-objective}) with respect to the logits $\boldsymbol{z}_\theta$ is given by:
\begin{equation}
    \boldsymbol{H}_\text{SFT}=\text{diag}(\boldsymbol{\pi}_\theta)-\boldsymbol{\pi}_\theta\boldsymbol{\pi}_\theta^\top,
\end{equation}
where $\boldsymbol{\pi}_\theta\in\mathbb{R}^{|\mathcal{V}|}$ denotes the action probability distribution over the vocabulary $\mathcal{V}$ at time step $t$, $\text{diag}(\boldsymbol{\pi}_\theta)$ is a diagonal matrix with the elements of $\boldsymbol{\pi}_\theta$ along its main diagonal. For any vector $\boldsymbol{v}\in\mathbb{R}^{|\mathcal{V}|}$, consider the quadratic form:
\begin{equation}
    \label{eq:sft-psd}
    \boldsymbol{v}^\top\boldsymbol{H}_\text{SFT}\boldsymbol{v}=\sum^{|\mathcal{V}|}_i\pi_{\theta,i}v^2_i-\left(\sum^{|\mathcal{V}|}_i\pi_{\theta,i}v_i\right)^2\geq0,
\end{equation}
where the final inequality follows from the Cauchy-Schwarz inequality. This demonstrates that the Hessian matrix $\boldsymbol{H}_\text{SFT}$ is positive semi-definite (PSD), which in turn implies that $\mathcal{L}_\text{SFT}$ is logits convex at each time step.

% \subsection{Logits Convexity of PPO Loss}
% \label{sec:proof-of-ppo-convexity}

\paragraph{Logits Convexity of PPO} The gradient of $\mathcal{L}_\text{PPO}$, as defined in \cref{eq:ppo-objective}, is non-zero only when the condition $(A(s_t,a_t) > 0 \text{ and } r_t(\theta)<1+\epsilon) \text{ or } (A(s_t,a_t)<0 \text{ and } r_t(\theta) > 1 - \epsilon)$ is satisfied. If this condition is not met, the objective enters the clipped region, and the gradient becomes zero. Therefore, we focus our analysis on the active region where the gradient is non-zero. Under this setting, the gradient of $\mathcal{L}_\text{PPO}$ with respect to a logit $z_\theta(s_t,a_t')$ at time step $t$ is:
\begin{equation}
    \frac{\partial \mathcal{L}_\text{PPO}}{\partial z_\theta(s_t,a_t')}=-\frac{A(s_t,a_t)}{\pi_\text{old}(a_t|s_t)}\pi_\theta(a_t|s_t)(\mathbb{I}_{a_t=a'_t}-\pi_\theta(a'_t|s_t)).
\end{equation}
And the second derivative is as follow:
\begin{equation}
    \label{eq:ppo-second-derivative}
    \begin{aligned}
        &\frac{\partial^2\mathcal{L}_\text{PPO}}{\partial z_\theta(s_t,a'_t)\partial z_\theta(s_t,a''_t)}= \\
        & -\frac{A(s_t,a_t)}{\pi_\text{old}(a_t|s_t)}[\pi_\theta(a_t|s_t)(\mathbb{I}_{a_t=a''_t}-\pi_\theta(a''_t|s_t))(\mathbb{I}_{a_t=a'_t}-\pi_\theta(a'_t|s_t))-\pi_\theta(a_t|s_t)\pi_\theta(a'_t|s_t)(\mathbb{I}_{a'_t=a''_t}-\pi_\theta(a''_t|s_t))].
    \end{aligned}
\end{equation}
According to \cref{eq:ppo-second-derivative}, the Hessian matrix $\boldsymbol{H}_\text{PPO}$ of $\mathcal{L}_\text{PPO}$ with respect to logits at time step $t$ is given by:
\begin{equation}
    \boldsymbol{H}_\text{PPO}=\frac{A(s_t,a_t)}{\pi_\text{old}(a_t|s_t)}\left[-\boldsymbol{e}^{(k)}\boldsymbol{e}^{(k)\top}+\boldsymbol{\pi}_\theta\boldsymbol{e}^{(k)\top}+\boldsymbol{e}^{(k)}\boldsymbol{\pi}_\theta^\top-2\boldsymbol{\pi}_\theta\boldsymbol{\pi}_\theta^\top+\text{diag}(\boldsymbol{\pi}_\theta)\right],
\end{equation}
where $k$ is the index of action $a_t$ in the vocabulary, $\boldsymbol{e}^{(k)}$ represents the standard $|\mathcal{V}|$-dimension basis vector with a 1 at position $k$, $\boldsymbol{\pi}_\theta\in\mathbb{R}^{|\mathcal{V}|}$ denotes the action probability distribution over the vocabulary $\mathcal{V}$ at time step $t$, and $\text{diag}(\boldsymbol{\pi}_\theta)$ is a diagonal matrix with the elements of $\boldsymbol{\pi}_\theta$ along its main diagonal. For  any vector $\boldsymbol{v}\in\mathbb{R}^{|\mathcal{V}|}$, consider the quadratic form:
\begin{equation}
    \boldsymbol{v}^\top\boldsymbol{H}_\text{PPO}\boldsymbol{v}=\frac{A(s_t,a_t)}{\pi_\text{old}(a_t|s_t)}\left[\underbrace{\sum^{|\mathcal{V}|}_i\pi_{\theta,i}v^2_i-\left(\sum^{|\mathcal{V}|}_i\pi_{\theta,i}v_i\right)^2}_{\mathbb{D}(\boldsymbol{v})}-\underbrace{\left(v^2_k-2v_k\sum^{|\mathcal{V}|}_i\pi_{\theta,i}v_i+\left(\sum^{|\mathcal{V}|}_i\pi_{\theta,i}v_i\right)^2\right)}_{(v_k-\mathbb{E}(\boldsymbol{v}))^2}\right],
\end{equation}
where $\mathbb{E}(\boldsymbol{v})=\sum^{|\mathcal{V}|}_i\pi_{\theta,i}v_i$, and $\mathbb{D}(\boldsymbol{v})=\sum^{|\mathcal{V}|}_i\pi_{\theta,i}v^2_i-(\sum^{|\mathcal{V}|}_i\pi_{\theta,i}v_i)^2$. According to \cref{eq:sft-psd}, we have $\mathbb{D}(\boldsymbol{v})\geq 0$. Now, consider the case when $A(s_t,a_t)>0$:
\begin{equation}
    \begin{cases}
        \boldsymbol{v}^\top\boldsymbol{H}_\text{PPO}\boldsymbol{v}<0, & \text{ if } v_k>\mathbb{E}(\boldsymbol{v})+\sqrt{\mathbb{D}(\boldsymbol{v})} \text{ or } v_k<\mathbb{E}(\boldsymbol{v}) - \sqrt{\mathbb{D}(\boldsymbol{v})} \\
        \boldsymbol{v}^\top\boldsymbol{H}_\text{PPO}\boldsymbol{v}\geq0, & \text{ otherwise. }
    \end{cases}
\end{equation}
A symmetric result holds for the case where $A(s_t,a_t)<0$. This implies Hessian matrix $\boldsymbol{H}_\text{PPO}$ is not PSD, which indicates that the PPO loss $\mathcal{L}_\text{PPO}$ is not convex with respect to logits.

\subsection{Logits Convexity of LCO Losses}
\label{sec:proof-of-lco-mse-convexity}

\paragraph{Logits Convexity of LCO-MSE Loss} At time step $t$, the Hessian matrix $\boldsymbol{H}_\text{LCO-MSE}$ of $\mathcal{L}_\text{LCO-MSE}$ (defined in \cref{eq:lco-mse-objective}) with respect to the logits $\boldsymbol{z}_\theta$ is given by:
\begin{equation}
    \boldsymbol{H}_\text{LCO-MSE}=\frac{2}{|\mathcal{V}|}\boldsymbol{I},
\end{equation}
where $\boldsymbol{I}$ is the identity matrix. Since the eigenvalues $\frac{2}{|\mathcal{V}|}>0$, it follows that the Hessian is positive definite. Consequently, $\mathcal{L}_\text{LCO-MSE}$ is not only convex but also strongly convex in the logit space, with a modulus of strong convexity equal to $2/|\mathcal{V}|$.

% For any vector $\boldsymbol{v}\in\mathbb{R}^n$, we consider the quadratic form:
% \begin{equation}
%     \boldsymbol{v}^\top\boldsymbol{H}_\text{LCO-MSE}\boldsymbol{v}=\frac{2}{|\mathcal{V}|}\sum^n_iv^2_i\geq0.
% \end{equation}
% This result confirms that $\boldsymbol{H}_\text{LCO-MSE}$ is PSD, implying that $\mathcal{L}_\text{LCO-MSE}$ is convex with respect to the logits. Furthermore, since $\boldsymbol{v}^\top \boldsymbol{H}_\text{LCO-MSE} \boldsymbol{v} > 0$ for any non-zero vector $\boldsymbol{v}$, the Hessian is strictly positive definite. Consequently, $\mathcal{L}_\text{LCO-MSE}$ is not only convex but also strongly convex in the logit space, with a constant modulus of strong convexity equal to $2/|\mathcal{V}|$.

% \subsection{Logits Convexity of LCO-LCH Loss}
% \label{sec:proof-of-lco-lch-convexity}

\paragraph{Logits Convexity of LCO-LCH Loss} At time step $t$, the partial derivative of $\mathcal{L}_\text{LCO-LCH}$ (defined in \cref{eq:lco-log-cosh-objective}) with respect to a logit $z_\theta(s_t,a'_t)$ is given by:
\begin{equation}
    \frac{\partial\mathcal{L}_\text{LCO-LCH}}{\partial z_\theta(s_t,a'_t)}=\frac{1}{|\mathcal{V}|}\tanh (z_\theta(s_t,a'_t)-z^*(s_t,a'_t)).
\end{equation}
The corresponding second-order derivative is non-zero only for the diagonal elements ($a'_t = a''_t$):
\begin{equation}
    \frac{\partial^2\mathcal{L}_\text{LCO-LCH}}{\partial z_\theta(s_t,a'_t)\partial z_\theta(s_t,a''_t)}=\frac{1}{|\mathcal{V}|}\text{sech}^2(z_\theta(s_t,a'_t)-z^*(s_t,a'_t)).
\end{equation}
Thus, the Hessian $\boldsymbol{H}_\text{LCO-LCH}$ is given by the diagonal matrix:
\begin{equation}
    \boldsymbol{H}_\text{LCO-LCH}=\text{diag}\left(\frac{1}{|\mathcal{V}|}\text{sech}^2(\boldsymbol{z}_\theta-\boldsymbol{z}^*)\right),
\end{equation}
where $\boldsymbol{z}_\theta\in\mathbb{R}^{|\mathcal{V}|}$ denotes the logits vector, and $\boldsymbol{z}^*\in\mathbb{R}^{|\mathcal{V}|}$ denotes the optimal logits vector.
Since the eigenvalues $\frac{1}{|\mathcal{V}|}\text{sech}^2(x)\in(0,\frac{1}{|\mathcal{V}|}]$ for all $x\in\mathbb{R}$, it follows that the Hessian is positive definite. Furthermore, within any bounded region where $\|\boldsymbol{z}_\theta-\boldsymbol{z}^*\|_{\infty}\leq R$, the eigenvalues of the Hessian are lower-bounded by $\frac{1}{|\mathcal{V}|}\text{sech}^2(R)>0$. Consequently, $\mathcal{L}_\text{LCO-LCH}$ is not only convex but also locally strongly convex at the logits level.

% For any nonzero vector $\boldsymbol{v}\in\mathbb{R}^n$, consider the quadratic form:
% \begin{equation} 
%     \boldsymbol{v}^\top \boldsymbol{H}_\text{LCO-LCH} \boldsymbol{v} = \frac{1}{|\mathcal{V}|} \sum^n_i v^2_i \text{sech}^2(z_{\theta,i} - z^*_i) > 0. 
% \end{equation}
% Consequently, $\boldsymbol{H}_\text{LCO-LCH}$ is PSD, ensuring the logits convexity of the $\mathcal{L}_\text{LCO-LCH}$ objective. Notably, since $\boldsymbol{v}^\top \boldsymbol{H}_\text{LCO-LCH} \boldsymbol{v}>0$ for any non-zero vector $\boldsymbol{v}$, the Hessian is strictly positive definite for finite logits. Then, $\mathcal{L}_\text{LCO-LCH}$ is not only convex but also strongly convex in the logit space.

% \subsection{Logits Convexity of LCO-KLD Loss}
% \label{sec:proof-of-lco-kld-convexity}

\paragraph{Logits Convexity of LCO-KLD Loss} At time step $t$, the partial derivative of $\mathcal{L}_\text{LCO-KLD}$ (defined in \cref{eq:lco-kld-objective}) with respect to a logit $z_\theta(s_t,a'_t)$ is given by:
\begin{equation}
    \frac{\partial \mathcal{L}_\text{LCO-KLD}}{\partial z_\theta(s_t,a'_t)}=\pi_\theta(a'_t|s_t)-\pi^*(a'_t|s_t).
\end{equation}
And the second derivative is as follow:
\begin{equation}
    \label{eq:lco-second-derivative}
    \frac{\partial^2 \mathcal{L}_\text{LCO-KLD}}{\partial z_\theta(s_t,a'_t)\partial z_\theta(s_t,a''_t)}=\pi_\theta(a'_t|s_t)(\mathbb{I}_{a'_t=a''_t}-\pi_\theta(a''_t|s_t)).
\end{equation}
According to \cref{eq:lco-second-derivative}, the Hessian matrix $\boldsymbol{H}_\text{LCO-KLD}$ of $\mathcal{L}_\text{LCO-KLD}$ with respect to logits at time step $t$ is given by:
\begin{equation}
    \boldsymbol{H}_\text{LCO-KLD}=\text{diag}(\boldsymbol{\pi}_\theta) - \boldsymbol{\pi}_\theta\boldsymbol{\pi}^\top_\theta,
\end{equation}
From the conclusion in \cref{eq:sft-psd}, $\boldsymbol{H}_\text{LCO-KLD}$ is PSD, which in turn implies $\mathcal{L}_\text{LCO-KLD}$ is logits convex at each time step.

\section{Upper Bound of Gradient Norm for LCO Objectives}
\label{sec:proof-of-grdient-upper-bound}

\subsection{Upper Bound of LCO-MSE}

Given the gradient norm expression for $\mathcal{L}_\text{LCO-MSE}$ at time step $t$:
\begin{equation}
    \|\nabla_\theta\mathcal{L}_\text{LCO-MSE}\|=\frac{2}{|\mathcal{V}|}\|\boldsymbol{J}^\top(\boldsymbol{z}_\theta-\boldsymbol{z}^*)\|,
\end{equation}
where $\boldsymbol{J}$ is the Jacobian matrix $\nabla_\theta\boldsymbol{z}_\theta^\top$. Under Assumption \ref{ass:second-order-assumption}, $\boldsymbol{J}$ is approximately constant in the local neighborhood. Let $\sigma_\text{max}$ be the maximum singular value of $\boldsymbol{J}^\top$, $\Delta z(s_t,a_t)=z_\theta(s_t,a_t)-z^*(s_t,a_t)$, using property of matrix induced norm:
\begin{equation}
    \|\boldsymbol{J}^\top(\boldsymbol{z}_\theta-\boldsymbol{z}^*)\|\leq\sigma_\text{max}\sqrt{\sum_{a_t\in\mathcal{V}}\Delta z(s_t,a_t)^2}=\sigma_\text{max}\sqrt{|\mathcal{V}|\mathcal{L}_\text{LCO-MSE}}.
\end{equation}
Therefore, the upper bound of the gradient norm is:
\begin{equation}
     \|\nabla_\theta\mathcal{L}_\text{LCO-MSE}\|\leq\frac{2}{|\mathcal{V}|}\sigma_\text{max}\sqrt{|\mathcal{V}|\mathcal{L}_\text{LCO-MSE}}.
\end{equation}
As the loss $\mathcal{L}_\text{LCO-MSE}$ decreases, the upper bound of the gradient norm decreases monotonously.

\subsection{Upper Bound of LCO-LCH}

Given the gradient norm expression for $\mathcal{L}_\text{LCO-LCH}$, at time step $t$:
\begin{equation}
    \|\nabla_\theta\mathcal{L}_\text{LCO-LCH}\|=\frac{1}{|\mathcal{V}|}\left\|\boldsymbol{J}^\top\tanh(\boldsymbol{z}_\theta-\boldsymbol{z}^*)\right\|,
\end{equation}
Using the property of the matrix induced norm:
\begin{equation}
    \left\|\boldsymbol{J}^\top\tanh(\boldsymbol{z}_\theta-\boldsymbol{z}^*)\right\|\leq\sigma_\text{max}\sqrt{\sum_{a_t\in\mathcal{V}}\tanh^2\Delta z(s_t,a_t)}.
\end{equation}
To derive the upper bound, we need to express $\tanh^2$ in terms of $\log\cosh$. We use the following hyperbolic identity:
\begin{equation}
    \sum_{a_t\in\mathcal{V}}\tanh^2\Delta z(s_t,a_t)=\sum_{a_t\in\mathcal{V}}\left[1-\exp{(-2\log\cosh\Delta z(s_t,a_t))}\right].
\end{equation}
By use the Jensen's Inequality for the concave function $1-e^{-2x}$:
\begin{equation}
    \frac{1}{|\mathcal{V}|}\sum_{a_t\in\mathcal{V}}\left[1-\exp{(-2\log\cosh\Delta z(s_t,a_t))}\right]\leq 1-\exp\left(-\frac{2}{|\mathcal{V}|}\sum_{a_t\in\mathcal{V}}\log\cosh\Delta z(s_t,a_t)\right).
\end{equation}
Therefore, the upper bound of the gradient norm is:
\begin{equation}
    \|\nabla_\theta\mathcal{L}_\text{LCO-LCH}\|\leq\frac{1}{|\mathcal{V}|}\sigma_\text{max}\sqrt{|\mathcal{V}|(1-\exp(-2\mathcal{L}_\text{LCO-LCH}))}.
\end{equation}
As the loss $\mathcal{L}_\text{LCO-LCH}$ decreases, the upper bound of the gradient norm decreases monotonously.

\subsection{Upper Bound of LCO-KLD}
Given the gradient norm expression for $\mathcal{L}_\text{LCO-KLD}$ at time step $t$:
\begin{equation}
    \|\nabla_\theta\mathcal{L}_\text{LCO-KLD}\|=\|\boldsymbol{J}^\top(\boldsymbol{\pi}_\theta-\boldsymbol{\pi}^*)\|.
\end{equation}
Let $\Delta\pi(s_t,a_t)=\pi_\theta(s_t,a_t)-\pi^*(s_t,a_t)$, using the property of the matrix induced norm:
\begin{equation}
    \|\boldsymbol{J}^\top(\boldsymbol{\pi}_\theta-\boldsymbol{\pi}^*)\|\leq\sigma_\text{max}\sqrt{\sum_{a_t\in\mathcal{V}}\Delta\pi(s_t,a_t)^2}.
\end{equation}
Since $L_2$ norm is upper-bounded by the $L_1$ norm, according to the Pinsker's Inequality:
\begin{equation}
    \sqrt{\sum_{a_t\in\mathcal{V}}\Delta\pi(s_t,a_t)^2}\leq\sum_{a_t\in\mathcal{V}}|\Delta\pi(s_t,a_t)|\leq\sqrt{2\sum_{a_t\in\mathcal{V}}\pi^*(a_t|s_t)\log\frac{\pi^*(a_t|s_t)}{\pi_\theta(a_t|s_t)}}.
\end{equation}
Therefore, the upper bound of the gradient norm is:
\begin{equation}
    \|\nabla_\theta\mathcal{L}_\text{LCO-KLD}\|\leq\sigma_\text{max}\sqrt{2\mathcal{L}_\text{LCO-KLD}}.
\end{equation}
As the loss $\mathcal{L}_\text{LCO-KLD}$ decreases, the upper bound of the gradient norm decreases monotonously.

\section{Convergence Rate}
\label{sec:appendix-lco-convergence}

\subsection{LCO MSE Objective}
\label{sec:appendix-lco-mse-convergence}
Consider the loss function $\mathcal{L}_\text{LCO-MSE}$ defined in \cref{eq:lco-mse-objective}, its gradient is given by:
\begin{equation}
    \nabla_\theta\mathcal{L}_\text{LCO-MSE}=\frac{2}{|\mathcal{V}|}\boldsymbol{J}^\top(\boldsymbol{z}_\theta-\boldsymbol{z}^*),
\end{equation}
where $\boldsymbol{J}=\nabla_\theta\boldsymbol{z}_\theta^\top$ denotes the Jacobian matrix. Given a step size $\eta$, the parameter update rule is:
\begin{equation}
    \theta_{k+1}=\theta_k-\eta\frac{2}{|\mathcal{V}|}\boldsymbol{J}^\top(\boldsymbol{z}_\theta-\boldsymbol{z}^*).
\end{equation}
Left-multiplying by $\boldsymbol{J}$ and using the linear approximation $\boldsymbol{z}_\theta-\boldsymbol{z}^*\approx\boldsymbol{J}(\theta-\theta^*)$ from Assumption \ref{ass:second-order-assumption}, we obtain:
\begin{equation}
    \boldsymbol{z}_{\theta_{k+1}}-\boldsymbol{z}^*=\left(\boldsymbol{I}-\eta\frac{2}{|\mathcal{V}|}\boldsymbol{J}\boldsymbol{J}^\top\right)(\boldsymbol{z}_{\theta_k}-\boldsymbol{z}^*).
\end{equation}
By induction, the error after $k$ steps is:
\begin{equation}
    \boldsymbol{z}_{\theta_k}-\boldsymbol{z}^*=\left(\boldsymbol{I}-\eta\frac{2}{|\mathcal{V}|}\boldsymbol{J}\boldsymbol{J}^\top\right)^k(\boldsymbol{z}_\text{old}-\boldsymbol{z}^*),
\end{equation}
where $\boldsymbol{z}_\text{old}$ (i.e. the initial logits) is the logits of the behavioral policy $\pi_\text{old}$. By using the $L_2$ norm and substituting $\boldsymbol{z}_\text{old}-\boldsymbol{z}^*=-\boldsymbol{A}/\beta$ from \cref{eq:optimal-logits}, where $\boldsymbol{A}$ is the advantage vector:
\begin{equation}
    \|\boldsymbol{z}_{\theta_k}-\boldsymbol{z}^*\|^2\leq\left\|\boldsymbol{I}-\eta\frac{2}{|\mathcal{V}|}\boldsymbol{J}\boldsymbol{J}^\top\right\|^{2k}\frac{\|\boldsymbol{A}\|^2}{\beta^2}.
\end{equation}
Let $\lambda_i$ denotes the $i$-th eigenvalue of $\boldsymbol{J}\boldsymbol{J}^\top$, and $\rho_\text{MSE}=\max_i\left|1-\eta\frac{2}{|\mathcal{V}|}\lambda_i\right|$ is the spectral radius. If the step size $\eta$ is sufficiently small, so that $\rho_\text{MSE}<1$, then the loss converges linearly:
\begin{equation}
    \mathcal{L}_\text{LCO-MSE}(\theta_k)\leq\frac{1}{|\mathcal{V}|}\rho_\text{MSE}^{2k}\frac{\|\boldsymbol{A}\|^2}{\beta^2}.
\end{equation}

\subsection{LCO LCH Objective}
\label{sec:appendix-lco-lch-convergence}

Consider the loss function $\mathcal{L}_\text{LCO-LCH}$ defined \cref{eq:lco-log-cosh-objective}, its gradient is given by:
\begin{equation}
    \nabla_\theta \mathcal{L}_\text{LCO-LCH} = \frac{1}{|\mathcal{V}|}\boldsymbol{J}^\top\tanh(\boldsymbol{z}_\theta-\boldsymbol{z}^*).
\end{equation}
Noting that for small residuals $\tanh(x)\approx x$. Given a step size $\eta$, the gradient near the optimum simplifies to: 
\begin{equation}
    \theta_{k+1}=\theta_k-\eta\frac{1}{|\mathcal{V}|}\boldsymbol{J}^\top(\boldsymbol{z}_{\theta_k}-\boldsymbol{z}^*).
\end{equation}
Drawing the conclusion from previous Section \ref{sec:appendix-lco-mse-convergence}, it yields:
\begin{equation}
    \|\boldsymbol{z}_{\theta_k}-\boldsymbol{z}^*\|^2\leq\left\|\boldsymbol{I}-\eta\frac{1}{|\mathcal{V}|}\boldsymbol{J}\boldsymbol{J}^\top\right\|\frac{\|\boldsymbol{A}\|^2}{\beta^2}.
\end{equation}
Using a Taylor expansion, near the optimum we have $\log\cosh(x)\approx\frac{1}{2}x^2$, leading to $\mathcal{L}_\text{LCO-LCH}\approx\frac{1}{2|\mathcal{V}|}\|\boldsymbol{z}_\theta-\boldsymbol{z}^*\|^2$. Let $\rho_\text{LCH}=\max_i\left|1-\eta\frac{1}{|\mathcal{V}|}\lambda_i\right|$. If the step size $\eta$ is sufficiently small, so that $\rho_\text{LCH}<1$, then the loss converges linearly:
\begin{equation}
    \mathcal{L}_\text{LCO-LCH}(\theta_k)\leq\frac{1}{2|\mathcal{V}|}\rho_\text{LCH}^{2k}\frac{\|\boldsymbol{A}\|^2}{\beta^2}.
\end{equation}

\end{document}